\pdfoutput=1
\documentclass{article}
\usepackage{microtype}
\usepackage{epsfig,psfrag}
\usepackage[subfigure]{tocloft}
\usepackage{booktabs} 
\usepackage[table]{xcolor}
\usepackage{tablefootnote}
\usepackage{caption}
\usepackage{subfig}

\usepackage[accepted]{icml2024}

\usepackage{amsmath}
\usepackage{amssymb}
\usepackage{mathtools}
\usepackage{dsfont}
\usepackage{pdfpages}
\usepackage{enumitem}
\usepackage{url}
\usepackage{multicol}
\usepackage{amsthm}
\usepackage{graphicx}
\usepackage{multirow}
\usepackage{makecell}
\usepackage{float}
\usepackage{wrapfig}  
\usepackage{lipsum}  
\usepackage{dsfont}
\usepackage{hyperref}
\usepackage[capitalize,noabbrev]{cleveref}
\theoremstyle{plain}
\newtheorem{theorem}{Theorem}[section]

\newtheorem{lemma}[theorem]{Lemma}

\theoremstyle{definition}

\newtheorem{property}[theorem]{Property}
\newtheorem{example}[theorem]{Example}
\theoremstyle{remark}

\usepackage[textsize=tiny]{todonotes}

\icmltitlerunning{Improving Group Robustness on Spurious Correlation Requires Preciser Group Inference}

\begin{document}

\twocolumn[
\icmltitle{Improving Group Robustness on Spurious Correlation\\Requires Preciser Group Inference}



\icmlsetsymbol{equal}{*}

\begin{icmlauthorlist}
\icmlauthor{Yujin Han}{sch1}
\icmlauthor{Difan Zou}{sch2}
\end{icmlauthorlist}

\icmlaffiliation{sch1}{Department of Computer Science, The University of Hong Kong}
\icmlaffiliation{sch2}{Department of Computer Science \& Institute of Data Science, The University of Hong Kong}

\icmlcorrespondingauthor{Difan Zou}{dzou@cs.hku.hk}

\icmlkeywords{Machine Learning, ICML}
\vskip 0.3in
]



\printAffiliationsAndNotice{}  

\begin{abstract}
Standard empirical risk minimization (ERM) models may prioritize learning spurious correlations between spurious features and true labels, leading to poor accuracy on groups where these correlations do not hold. Mitigating this issue often requires expensive spurious attribute (group) labels or relies on trained ERM models to infer group labels when group information is unavailable. 
However, the significant performance gap in worst-group accuracy between using pseudo group labels and using oracle group labels inspires us to consider further improving group robustness through preciser group inference. Therefore, we propose GIC, a novel method that accurately infers group labels, resulting in improved worst-group performance. GIC trains a spurious attribute classifier based on two key properties of spurious correlations: (1) high correlation between spurious attributes and true labels, and (2) variability in this correlation between datasets with different group distributions. Empirical studies on multiple datasets demonstrate the effectiveness of GIC in inferring group labels, and combining GIC with various downstream invariant learning methods improves worst-group accuracy, showcasing its powerful flexibility. Additionally, through analyzing the misclassifications in GIC, we identify an interesting phenomenon called semantic consistency, which may contribute to better decoupling the association between spurious attributes and labels, thereby mitigating spurious correlation. The code for GIC is available at \href{https://github.com/yujinhanml/GIC}{https://github.com/yujinhanml/GIC}.
\end{abstract}

\section{Introduction}
\label{Introduction}
The presence of spurious correlation causes machine learning models to fail on certain groups of samples, even when achieving high accuracy on average group \cite{ribeiro2016should,beery2018recognition,hashimoto2018fairness,duchi2019distributionally}. Deep networks trained via standard empirical risk minimization (ERM) may be biased by the spurious attributes/features such as the image backgrounds \citep{beery2018recognition,geirhos2020shortcut}, which admit different correlations with the true labels for the different groups of data. As a result, the model typically performs well on the majority group of data that aligns with the spurious attributes but performs poorly on certain groups of data that have no or even opposite correlations with spurious attributes. This critical issue is widespread in many fields, such as medical AI, facial recognition and sentiment analysis \cite{blodgett2016demographic,tatman2017gender,gururangan2018annotation,badgeley2019deep,sagawa2019distributionally}.

There have emerged a series of previous works that aim to develop more robust training methods to improve the worst-group performance when group information is available.  For instance, one can focus on particularly training over the worst-group risk \citep{sagawa2019distributionally} via group distributional robust optimization (GroupDRO). Additionally, the spurious correlation can also be eliminated by balancing the risk in different groups, which can be achieved via feature reweighting \citep{kirichenko2022last} or selective Mixup \citep{yao2022improving}. These methods have been demonstrated to be effective to mitigate the spurious correlations, leading to substantial improvements over ERM. 

However, group labeling is expensive, labor-intensive, and human-biased. It is more important and practical to develop robust algorithms for improving worst-group accuracy when the group information is unavailable \citep{liu2021just,zhang2022correct}. In such a scenario, a line of recent works 
\cite{sohoni2020no,nam2020learning,ahmed2020systematic,liu2021just,zhang2022correct,chen2023understanding} propose to first infer the data groups, and then seek to train a robust model according to the estimated group information. For instance, JTT \cite{liu2021just} identifies the data being misclassified by the ERM as the minority group, and trains a more robust model via upweighting the examples in minority groups; CnC \cite{zhang2022correct} infers pseudo group labels via ERM and develops a contrastive learning method across groups to train the final model.

The primary focus of the aforementioned research is on invariant learning, i.e., leveraging the group information obtained from ERM to learn the invariant attributes. However, the accuracy of group label inference has been overlooked, making the existing group inference-based methods perform significantly worse than those using oracle group labels. Very recently, some group inference methods have been developed as alternatives to ERM, such as EIIL \citep{creager2021environment}, SSA \citep{nam2022spread}, ZIN \cite{lin2022zin} and DISC \cite{wu2023discover}. However, they may either struggle with complex spurious correlations (e.g., EIIL) \citep{lin2022zin}, or require additional information (e.g., human inspection, few spurious annotations, etc) that is related to the group labels (e.g., SSA, ZIN, and DISC). Therefore, noticing the performance shortcomings and applicability limitations in existing group inference methods, we raise an important yet challenging question: 

\textit{Can we develop a more accurate group inference method to mitigate spurious correlations without relying on any additional information?}

In this work, we propose GIC (Group Inference via data Comparison), a novel, practical and accurate method to infer group labels, which can be then seamlessly incorporated into a variety of oracle group label-based robust learning methods to improve the worst-group performance. 
In particular, GIC adopts a (unlabeled) comparison dataset, which has (slightly) different group distribution compared to the training dataset, and infers the group information in a contrastive manner. More specifically, GIC trains an alternative classifier to assign spurious attribute labels to data points solely on their spurious attributes, where the training objective is designed to encourage discovering the distribution discrepancy between the comparison and training dataset while maintaining the high correlation between the spurious attribute label and true label in the training dataset. Notably, unlike previous methods that may rely on (partial) group labels or human inspections, obtaining the comparison data does not necessitate any assumptions. It does not need to be labeled and can be obtained from various sources, such as a validation set with true validation labels, a completely unlabeled subset sampled from the test data, or even directly 
resampled from the training dataset in a non-uniform manner. We highlight the main contributions as follows:

\begin{enumerate}[leftmargin=*,nosep]
\item We propose GIC, a principled method for more accurate group inference. It encourages the high correlation between predicted spurious attribute labels and true labels on the training set, while emphasizing the differences in this correlation between training and comparison data. Compared with existing group inference methods, GIC consistently achieves higher recall and precision in predicting groups for various datasets (see \cref{The scalability of GIC}). 
\item We show that the proposed GIC can seamlessly integrate with multiple invariant learning algorithms to improve the worst-group accuracy. In \cref{Main Results} and \ref{The scalability of GIC}, GIC is successfully combined with Mixup \cite{yao2022improving}, GroupDRO \cite{creager2021environment}, Upsample \cite{liu2021just}, and Subsample \cite{kirichenko2022last}. It can be seen that GIC consistently outperform baselines in terms of the worst-group accuracy, for all candidate invariant learning algorithms. More importantly, when integrated with Mixup, our model improves over the state-of-the-art for nearly all tasks when the group label is unavailable. Additionally, the average and worst-group accuracy of our model can almost match that of using oracle group labels directly. This further justifies the effectiveness of our group inference method.
\item We illustrate that GIC can infer reasonable groups that differ from human decisions in \cref{Error Case}. Analysis of misclassified examples reveals GIC's semantic consistency, where similar semantic instances are assigned to the same group, although they are not categorized into the same group by human decisions. This semantic consistency benefits methods like Mixup, which rely on distorting semantics for invariant learning, leading to improved worst-group accuracy compared to using oracle group labels. It highlights the potential effectiveness and improvements of integrating GIC with human decisions when group information defined by humans is accessible.
\end{enumerate}
\section{Related Work}
\label{Related Work}
Recent research shows that the traditional ERM can learn both spurious and invariant features \cite{kirichenko2022last,izmailov2022feature,rosenfeld2022domain,chen2023understanding}. However, with strong spurious correlations, ERM tends to prioritize learning the spurious features \cite{chen2023understanding}, which hinders its ability to generalize on data where these spurious correlations are absent. To tackle this issue, invariant feature learning, referred to as invariant learning for simplicity, has been introduced to learn invariant representations. A notable approach to invariant learning is the use of group robustness methods.

When group labels are available, some classical group robustness methods, such as GroupDRO \cite{sagawa2019distributionally}, attempt to minimize the worst-group loss instead of the average loss using oracle group labels. Other methods aim to achieve invariant learning by balancing the majority and minority groups, such as reweighting \citep{sagawa2020investigation}, regularization \citep{cao2019learning}, and downsampling \citep{kirichenko2022last}. Additionally, approaches like semi-supervised learning \cite{nam2022spread, sohoni2021barack} or Mixup \citep{yao2022improving}, which selectively combine samples with matching labels but differing spurious attributes or matching spurious attributes but differing labels, are also be considered to improve the the worst-group accuracy. Some methods attempt to infer group labels by training a simple ERM \cite{blodgett2016demographic,tatman2017gender,gururangan2018annotation,badgeley2019deep,sagawa2019distributionally}, maximizing the GroupDRO loss \cite{creager2021environment}, or even introducing human knowledge \cite{lin2022zin,wu2023discover} when group information is unknown. However, these group inferred 
 methods have performance gaps compared to group annotation utilized methods and may not be applicable when prior information is unavailable. In this work, we focus on group robustness without relying on any group labels or human-provided information.

We point out that, although we aims to learn invariant features to train more robust and generalizable models, similar to classical domain adaptation \cite{blanchard2011generalizing,muandet2013domain} methods, such as UDA \cite{ganin2015unsupervised} and DANN \cite{ganin2016domain}, our setting differs from them which divide the training data into source and target domains and use the source and target features, along with the source labels, to transfer knowledge to the specific target domain \cite{zhang2022correct}. In contrast, we do not have a natural source and target domains and strive to enhance the accuracy of specific groups affected by spurious correlations, without prior knowledge of training sample domains or spurious attributes.

\section{Method}
\label{Method}

\subsection{Problem Setup}
Consider the training dataset $\mathcal{D}$, which comprises $n$ data point-label pairs, where each data point-label pair belongs to some group $g \in \mathcal{G}$, denoted as $\mathcal{D} = \{(\mathbf{x}_i, y_i, g_i)\}_{i=1}^n$. Following previous work \cite{liu2021just, zhang2022correct, yao2022improving, yang2023change}, we consider each group \(g = (y, a)\) to be jointly defined by the label \(y \in \mathcal{Y}\) and unobserved spurious attributes \(a \in \mathcal{A}\) (i.e., \(\mathcal{G} = \mathcal{Y} \times \mathcal{A}\)), where the spurious attribute \(a\) is spuriously correlated with the label \(y\) (e.g., the image background and the image label). We denote \(y_s\) as the spurious attribute label of \(a\) (e.g., the label of the image background, which can be ``land'' versus ``water'' or ``red'' versus ``green''). The groups $g$ in dataset $\mathcal{D}$ often encounter the imbalance issue, where ERM training tends to mostly depend on majority groups, thus may memorize the spurious correlation contributed by these groups. However, such spurious correlations are often absent or even oppositely appearing in minority groups. Therefore, our ultimate objective is to train a robust model $f_{\mathrm{robust}}(\theta)$, parameterized by $\theta \in {\Theta}$, capable of classifying an input $\mathbf{x}_i$ to a label $y_i$ regardless of whether it belongs to the majority or minority group. This is referred to as the worst-group accuracy defined as follows:
\begin{equation}
\label{eq:worst acc}
    {\max_{g \in \mathcal{G}} \mathbb{E}[\mathds{1}[f_{\theta}(\mathbf{x}) \neq y]|g]},
\end{equation}
where $\mathcal G$ denotes the set of groups. In this paper, we consider the setting in that the group information is unavailable, one can only train the robust classifier $f_{\mathrm{robust}}$ using the ungrouped data. Moreover, instead of using raw data point $\mathbf x$, we can also make use of its feature representation, which is obtained by training a model from scratch via ERM or directly using a pretrained model. We define the feature representation of $\mathbf x$ as follows:
\begin{equation}
\label{eq:reprentation}
\mathbf{z} = \Phi(\mathbf{x}),
\end{equation}
where $\Phi(\cdot)$ is a feature extractor.

\subsection{GIC: Group Inference via data Comparison}
\label{sec:GIC: Group Inference via data Comparison}
We now introduce GIC (\textbf{G}roups \textbf{I}nference via data \textbf{C}omparison), a principled and novel method that infers spurious attribute (group) labels and contributes to mitigating spurious correlations. The \textit{goal of GIC} is to train a spurious attribute classifier to predict the spurious attribute label \(y_s\), i.e.,
\begin{equation}
\label{eq:goal}
\hat{y}_{s,\mathbf{w}} = f_{\mathrm{GIC}}(\mathbf{z};\mathbf{w}),
\end{equation}
where $\mathbf{w}$ is weights of GIC model $f_{\mathrm{GIC}}$. Then using the predicted $\hat{y}_{s,\mathbf{w}}$, GIC partitions the data into different groups, i.e., $\hat g = (y, \hat{y}_{s,\mathbf{w}})$. The inferred group $\hat g$ can be integrated into downstream invariant learning methods, aiding in training the robust model $f_{\mathrm{robust}}$. 

\textbf{Comparison data.} Before formally introducing details of GIC, we first introduce the concept of comparison data. We assume access to a dataset \(\mathcal{C}\), where its group distribution (slightly) differs from the training data \(\mathcal{D}\), e.g., the group distribution of \(\mathcal{D}\) is \((g_1, g_2) = (0.1, 0.9)\), while the group distribution of \(\mathcal{C}\) is \((g_1, g_2) = (0.2, 0.8)\). We refer to \(\mathcal{C}\) as \textit{comparison data}, which can have true labels or be unlabeled. We remark that the comparison data is easy to obtain, which will be thoroughly discussed in \cref{The Construction of Comparison Data}.

Based on the definition of spurious attribute labels $y_s$, an ideal spurious attribute prediction $\hat{y}_{s,\mathbf{w}}$ should exhibit the following two properties: (1) It should be highly correlated with the true label $y$. It is this high correlation between the spurious attribute label and the true label that biases the ERM-based model training. (2) The correlation between $\hat{y}_{s,\mathbf{w}}$ and $y$ varies across different datasets. Unlike the invariant attribute label, which is always equivalent to the true label, the correlation between the spurious attribute label and the true label is spurious, changing with the spurious attribute distribution across different datasets. Consider an extreme case, the spurious background attribute for both waterbirds and landbirds in the comparison data accounts for 50\%, while in the training set, the spurious attribute label is completely equivalent to the true label (e.g., \(y_s = y\)). The equivalence correlation that exists in the training data does not hold in the comparison data.

Based on the mentioned two properties, we design the optimization objective of GIC, which consists of two terms: (1) \textbf{Correlation Term}, used to describe the high correlation between $\hat{y}_{s,\mathbf{w}}$ and $y$ in the training data; (2) \textbf{Spurious Term}, used to describe the discrepancy in the correlation between the training and comparison data, aiming to emphasize the correlation is spurious rather than invariant. By jointly optimizing the above objectives, GIC is encouraged to train a spurious attribute classifier that yields the spurious attribute prediction $\hat{y}_{s,\mathbf{w}}$, as shown in Equation \eqref{eq:goal}.

\textbf{Correlation Term.} To encourage the high correlation between $y$ and $\hat{y}_{s,\mathbf{w}}$ in the training set, we consider the following optimization objective:
\begin{equation}
\label{eq:correlation}
    \max_{\mathbf{w}} I(y^{tr};\hat y^{tr}_{s,\mathbf{w}}),
\end{equation}
where $\hat y^{tr}_{s,\mathbf{w}}=f_\mathrm{GIC}(\mathbf{z}^{tr};\mathbf{w})$ is the predicted spurious attribute label on training data. Here, mutual information is considered due to its widespread usage to measure the correlation or dependency between random variables \cite{li2016mutual, kong2019mutual, pan2020adversarial, su2023towards}. We aim to maximize the mutual information between $y_s$ and $y$, encouraging the predicted spurious attribute label $y_s$ from GIC to be highly correlated with the true label $y$.

Solely satisfying Equation \eqref{eq:correlation} is not enough as invariant attribute information contained in feature representations \cite{kirichenko2022last,chen2023understanding} can also exhibit high correlation with true labels. To aid GIC in learning spurious attributes instead of invariant ones, we introduce an additional spurious term.

\textbf{Spurious Term.} We use conditional probability to describe the correlation between $\hat{y}_{s,\mathbf{w}}$ and $y$, i.e., $\mathbb{P}(y|\hat{y}_{s,\mathbf{w}})$. Then let $\mathbb P(y^{tr}|\hat y^{tr}_{s})$ and $\mathbb P(y^{c}|\hat y^{c}_{s})$ be the conditional distributions in the training and comparison dataset respectively, we will characterize their discrepancy. Since we are comparing distributions with the same support, we opt for the simplest and widely used choice  \cite{ahmed2020systematic,lee2022diversify}, the KL-divergence. Then, we maximize the following objective to encourage GIC to learn spurious attributes:
\begin{equation}
\label{eq:spurious}
    \max_{\mathbf{w}} \mathrm{KL}(\mathbb{P}(y^{tr}|\hat y^{tr}_{s,\mathbf{w}})|| \mathbb{P}(y^{c}|\hat y^{c}_{s,\mathbf{w}})),
\end{equation}
where $\hat y^{c}_{s,\mathbf{w}} = f_\mathrm{GIC}(\mathbf{z}^{c};\mathbf{w})$ is the estimated spurious attribute labels on the comparison data. 

Note that a more formal expression for Equation \eqref{eq:spurious} should use different distribution notations and the same variable names, i.e.,
\begin{equation}
\max_{\mathbf{w}} \mathrm{KL}(\mathbb{P}_{\mathrm{tr}}(y|\hat y_{s,\mathbf{w}})|| \mathbb{P}_{\mathrm{c}}(y|\hat y_{s,\mathbf{w}})) 
\end{equation}
where $\mathbb{P}_{tr}(y|\hat y_{s,\mathbf{w}}):=\mathbb P(y^{\mathrm{tr}}|\hat y^{tr}_{s,\mathbf{w}})$ and $\mathbb{P}_{\mathrm{c}}(y|\hat y_{s,\mathbf{w}}):=\mathbb P(y^{c}|\hat y^{c}_{s,\mathbf{w}})$. Here, distinct variable names, such as $\hat y^{tr}_{s,\mathbf{w}}$ and $\hat y^{c}_{s,\mathbf{w}}$, are utilized to indicate their origins from different (training or comparison) datasets, thereby emphasizing GIC can achieve improved group inference via data comparison.

The essence of maximizing Equation \eqref{eq:spurious} is to encourage GIC to learn spurious attributes by violating the invariant learning principle \cite{creager2021environment}. If $\hat{y}_{s,\mathbf{w}}$ is inferred based on invariant attributes rather than spurious ones, then $\mathbb{P}(y^{tr}|\hat y^{tr}_{s,\mathbf{w}})$ and $\mathbb{P}(y^{c}|\hat y^{c}_{s,\mathbf{w}})$ will be identical and then
\begin{equation}
\label{eq:spurious-1}
\mathrm{KL}(\mathbb{P}(y^{tr}|\hat y^{tr}_{s,\mathbf{w}})|| \mathbb{P}(y^{c}|\hat y^{c}_{s,\mathbf{w}})) = 0.
\end{equation}
Conversely, if $\hat{y}_{s,\mathbf{w}}$ is based on spurious attributes rather than invariant attributes, the inequality 
\begin{equation}
\label{eq:spurious-1}
\mathrm{KL}(\mathbb{P}(y^{tr}|\hat y^{tr}_{s,\mathbf{w}})|| \mathbb{P}(y^{c}|\hat y^{c}_{s,\mathbf{w}})) \ge 0
\end{equation}
holds, where the equality holds only when the training and comparison data share the same group distribution. Therefore, maximizing Equation \eqref{eq:spurious} entails encouraging GIC to prioritize learning spurious attributes over invariant ones.
\begin{algorithm}[]
\caption{GIC}
\label{alg:GIC}
\begin{algorithmic}
\STATE {\bfseries Input:} Training data $\mathcal{D}$; comparison data $\mathcal{C}$; feature extractor $\Phi(\cdot)$; weighting parameters $\gamma$; training epochs $K$ of GIC
\STATE {\bfseries Stage 1: Extracting feature representations}
\STATE Obtain $\mathbf{z}^{tr} = \Phi(\mathbf{x}^{tr}), \mathbf{z}^{c} = \Phi(\mathbf{x}^{c})$ where $\mathbf{x}^{tr} \in \mathcal{D}, \mathbf{x}^{c} \in \mathcal{C}$.
\STATE {\bfseries Stage 2: Inferring group labels}
\STATE Initialize the parameters $\mathbf{w}$ for spurious attribute classifier $f_{\mathrm{GIC}}$.
\FOR{epoch $1$ to $K$}
\IF{the true label of $\mathcal{C}$ is available}
\STATE Optimizing Equation \eqref{eq:GIC with y^val} to update $\mathbf{w}$.
\ELSE
\STATE Optimizing Equation \eqref{eq:GIC without y^val} to update $\mathbf{w}$.
\ENDIF
\ENDFOR
\STATE Infer spurious attribute labels $\hat{y}^{tr}_{s,\mathbf{w}} = f_{\mathrm{GIC}}(\mathbf{z}^{tr};\mathbf{w})$.
\STATE {\bfseries Return:} Pseudo group labels $\hat{g} = (y^{tr}, \hat{y}^{tr}_{s,\mathbf{w}})$.
\end{algorithmic}
\end{algorithm}

\textbf{Overall Objective.} By combining Equation \eqref{eq:correlation} and \eqref{eq:spurious}, we derive the overall objective of GIC as follows:
\begin{equation}
\label{eq:theoretical optimization objective}
    \max_{\mathbf{w}} I(y^{tr};\hat y^{tr}_{s,\mathbf{w}}) + \gamma \mathrm{KL}(\mathbb{P}(y^{tr}|\hat y^{tr}_{s,\mathbf{w}})|| \mathbb{P}(y^{c}|\hat y^{c}_{s,\mathbf{w}})),
\end{equation}
where $\gamma \geq 0$ is a weighting parameter used to balance Correlation Term and Spurious Term.
\begin{figure*}
  \centering
  \begin{minipage}{0.48\textwidth}
    \centering
    \includegraphics[width=1\textwidth, height=0.16\textheight]
    {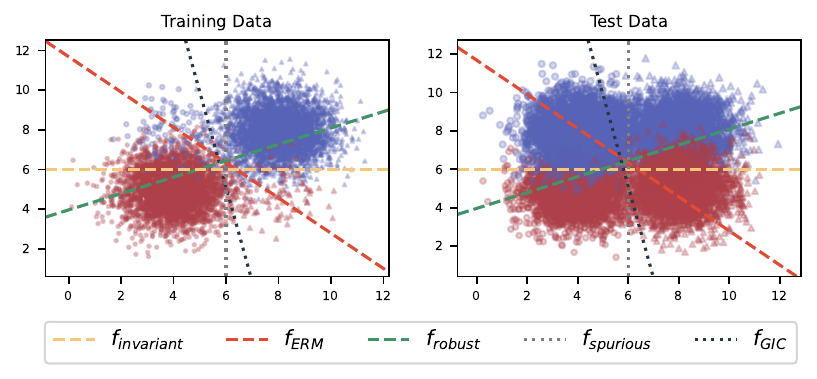}
    \caption{Decision boundary visualization.  $f_{\mathrm{ERM}}$ underperforms $f_{\mathrm{GIC}}$ in recognizing spurious attributes and  $f_{\mathrm{robust}}$ in identifying invariant attributes. Classes $0$ and  $1$ are represented by colors (red and blue), respectively, with shapes marking spurious attributes.
    }
    \label{fig:toy example}
  \end{minipage}%
  \hspace{0.02\textwidth} 
  \begin{minipage}{0.49\textwidth}
    \centering
    \includegraphics[width=1\textwidth, height=0.16\textheight]
    {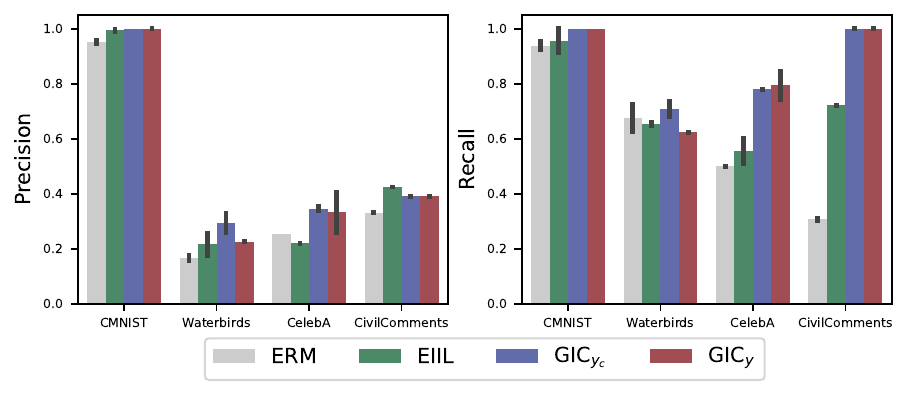}
    \caption{Evaluation of group label inference. Compared to baseline methods such as ERM and EIIL, GIC significantly improves the recall for minority group label inference while maintaining a relatively high precision.}
    \label{fig:sacc}
  \end{minipage}
\end{figure*}

However, the overall objective of GIC faces certain practical issues. Firstly, the mutual information term is difficult to accurately estimate \citep{paninski2003estimation, belghazi2018mutual}. Secondly, Equation \eqref{eq:theoretical optimization objective} cannot handle the situation where the comparison data is unlabeled, which limits the applicability of GIC in various scenarios.

We first replace the mutual information $I(y^{tr};\hat y^{tr}_{s,\mathbf{w}})$ with the cross-entropy $H(y^{tr},\hat y^{tr}_{s,\mathbf{w}})$ to achieve accurate estimation. A detailed proof in Appendix \ref{proof of Lower Bound of Correlation Term} demonstrates that $-H(y^{tr},\hat y^{tr}_{s,\mathbf{w}})$ is, in fact, a lower bound of $I(y^{tr};\hat y^{tr}_{s,\mathbf{w}})$. Therefore, maximizing $I(y^{tr};\hat y^{tr}_{s,\mathbf{w}})$ can be achieved by minimizing $H(y^{tr},\hat y^{tr}_{s,\mathbf{w}})$. This replacement aligns with intuition because maximizing mutual information between $y^{tr}$ and $\hat y^{tr}_{s,\mathbf{w}}$ essentially encourages a closer alignment of their distributions, which is consistent with the objective of minimizing cross-entropy $H(y^{tr},\hat y^{tr}_{s,\mathbf{w}})$.

We then extend GIC to the case where the true label $y^c$ of the comparison data are not available. We substitute the true labels $y^{c}$ with the comparison data's feature representation $\mathbf{z}^{c}$, which strongly associates with $y^{c}$ and is always accessible. We present the following theorem:
\begin{theorem}
\label{Lower Bound of Spurious Term without Accessible Y}
    [Lower Bound of Spurious Term without $y^{c}$]
    Given representations $\mathbf{z}^{tr}$ and $\mathbf{z}^{c}$, the spurious term is lower bounded by the following expression as:
     \begin{equation}
    \label{eq:GIC without y^val}
    \mathrm{KL}(\mathbb{P}(y^{tr}|\hat y^{tr}_{s,\mathbf{w}})|| \mathbb{P}(y^{c}|\hat y^{c}_{s,\mathbf{w}}) \geq \mathrm{KL}(\mathbb{P}(\mathbf{z}^{tr}|\hat y^{tr}_{s,\mathbf{w}})|| \mathbb{P}(\mathbf{z}^{c}|\hat y^{c}_{s,\mathbf{w}}))
\end{equation}
\end{theorem}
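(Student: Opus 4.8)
The plan is to reduce the statement to a chain-rule computation for the relative entropy of the \emph{joint} law of $(y,\mathbf{z})$ rather than reasoning about the $y$-marginal and $\mathbf{z}$-marginal separately. Writing $\mathbb{P}_{\mathrm{tr}}$ and $\mathbb{P}_{\mathrm{c}}$ for the laws on the training and comparison data (as in the ``more formal expression'' above), I would condition every distribution on $\hat y_{s,\mathbf{w}}$ and observe that the left-hand side is the $y$-marginal and the right-hand side is the $\mathbf{z}$-marginal of one and the same joint distribution $\mathbb{P}_{\bullet}(y,\mathbf{z}\mid\hat y_{s,\mathbf{w}})$. It therefore suffices to sandwich the joint relative entropy between the two marginal relative entropies. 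The argument holds for each fixed value of $\hat y_{s,\mathbf{w}}$, so taking a further expectation over $\hat y_{s,\mathbf{w}}$ preserves the inequality if one prefers the averaged conditional-KL reading.

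Concretely, I would expand the joint relative entropy by the chain rule in the two possible orders:
\[
\mathrm{KL}\big(\mathbb{P}_{\mathrm{tr}}(y,\mathbf{z}\mid\hat y_{s,\mathbf{w}})\,\|\,\mathbb{P}_{\mathrm{c}}(y,\mathbf{z}\mid\hat y_{s,\mathbf{w}})\big)=\mathrm{KL}\big(\mathbb{P}_{\mathrm{tr}}(y\mid\hat y_{s,\mathbf{w}})\,\|\,\mathbb{P}_{\mathrm{c}}(y\mid\hat y_{s,\mathbf{w}})\big)+\mathbb{E}_{y\sim\mathbb{P}_{\mathrm{tr}}}\big[\mathrm{KL}\big(\mathbb{P}_{\mathrm{tr}}(\mathbf{z}\mid y,\hat y_{s,\mathbf{w}})\,\|\,\mathbb{P}_{\mathrm{c}}(\mathbf{z}\mid y,\hat y_{s,\mathbf{w}})\big)\big],
\]
\[
\mathrm{KL}\big(\mathbb{P}_{\mathrm{tr}}(y,\mathbf{z}\mid\hat y_{s,\mathbf{w}})\,\|\,\mathbb{P}_{\mathrm{c}}(y,\mathbf{z}\mid\hat y_{s,\mathbf{w}})\big)=\mathrm{KL}\big(\mathbb{P}_{\mathrm{tr}}(\mathbf{z}\mid\hat y_{s,\mathbf{w}})\,\|\,\mathbb{P}_{\mathrm{c}}(\mathbf{z}\mid\hat y_{s,\mathbf{w}})\big)+\mathbb{E}_{\mathbf{z}\sim\mathbb{P}_{\mathrm{tr}}}\big[\mathrm{KL}\big(\mathbb{P}_{\mathrm{tr}}(y\mid\mathbf{z},\hat y_{s,\mathbf{w}})\,\|\,\mathbb{P}_{\mathrm{c}}(y\mid\mathbf{z},\hat y_{s,\mathbf{w}})\big)\big].
\]
The key step is a structural assumption that only the group proportions differ between $\mathcal{D}$ and $\mathcal{C}$, i.e. the feature-generating mechanism conditioned on the label and the inferred spurious attribute is shared: $\mathbb{P}_{\mathrm{tr}}(\mathbf{z}\mid y,\hat y_{s,\mathbf{w}})=\mathbb{P}_{\mathrm{c}}(\mathbf{z}\mid y,\hat y_{s,\mathbf{w}})$. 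This forces the cross term in the first expansion to vanish, so the joint relative entropy equals the $y$-term. In the second expansion the cross term is nonnegative by nonnegativity of relative entropy, so the joint relative entropy is at least the $\mathbf{z}$-term. Chaining the equality with the inequality gives $\mathrm{KL}(\mathbb{P}_{\mathrm{tr}}(y\mid\hat y_{s,\mathbf{w}})\,\|\,\mathbb{P}_{\mathrm{c}}(y\mid\hat y_{s,\mathbf{w}}))\ge\mathrm{KL}(\mathbb{P}_{\mathrm{tr}}(\mathbf{z}\mid\hat y_{s,\mathbf{w}})\,\|\,\mathbb{P}_{\mathrm{c}}(\mathbf{z}\mid\hat y_{s,\mathbf{w}}))$, which is exactly \eqref{eq:GIC without y^val}. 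Equivalently, this is a data-processing inequality: the two $\mathbf{z}$-distributions are pushforwards of the two $y$-distributions through a common channel, and post-processing can only shrink relative entropy.

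The mechanical content (two chain-rule expansions plus the nonnegativity of $\mathrm{KL}$) is routine; the hard part will be isolating and defending the invariance assumption that kills the first cross term, since this is precisely what distinguishes a \emph{spurious} distribution shift from an arbitrary covariate shift and what pins down the \emph{direction} of the bound. Indeed, had one instead posited that the labeling rule $\mathbb{P}(y\mid\mathbf{z},\hat y_{s,\mathbf{w}})$ is invariant across the two datasets (so that the cross term in the second expansion vanished), the identical algebra would yield the reverse inequality. A secondary technicality is to guarantee, as already remarked in the text, that the relevant conditionals are mutually absolutely continuous and share support, so that every relative entropy is finite and both chain-rule decompositions are valid.
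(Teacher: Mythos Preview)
Your proposal is correct and follows essentially the same route as the paper: both arguments pass through the joint relative entropy $\mathrm{KL}\big(\mathbb{P}_{\mathrm{tr}}(y,\mathbf{z}\mid\hat y_{s})\,\|\,\mathbb{P}_{\mathrm{c}}(y,\mathbf{z}\mid\hat y_{s})\big)$, use the invariance $\mathbb{P}_{\mathrm{tr}}(\mathbf{z}\mid y,\hat y_{s})=\mathbb{P}_{\mathrm{c}}(\mathbf{z}\mid y,\hat y_{s})$ to identify it with the $y$-marginal KL, and then lower-bound it by the $\mathbf{z}$-marginal KL via nonnegativity of the remaining conditional term. The only cosmetic difference is that the paper additionally invokes a fork causal structure $\hat y_{s}\leftarrow\mathbf{z}\rightarrow y$ to simplify $\mathbb{P}(y\mid\mathbf{z},\hat y_{s})$ to $\mathbb{P}(y\mid\mathbf{z})$ before discarding that term, whereas you drop the cross term directly; your version is slightly leaner since the causal assumption is not actually needed for the inequality step.
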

In Theorem \ref{Lower Bound of Spurious Term without Accessible Y}, when $y^{c}$ is missing, we resort to maximizing the lower bound $\mathrm{KL}(\mathbb{P}(\mathbf{z}^{tr}|\hat y^{tr}_{s,\mathbf{w}})|| \mathbb{P}(\mathbf{z}^{c}|\hat y^{c}_{s,\mathbf{w}}))$ as an alternative. We point out that maximizing a lower bound is meaningful as it provides the worst-case guarantee over the original objective. The detailed proof of Theorem \ref{Lower Bound of Spurious Term without Accessible Y} is provided in Appendix \ref{app:proof for theorem 1}. 

Therefore, based on whether $y^c$ is accessible, we propose the following two optimization objectives:

\textbf{With $y^c$.} The overall objective of GIC can be defined as:
\begin{equation}
    \label{eq:GIC with y^val}
    \min_{\mathbf{w}} H(y^{tr},\hat y^{tr}_{s,\mathbf{w}})- \gamma \mathrm{KL}(\mathbb{P}(\mathbf{y}^{tr}|\hat y^{tr}_{s,\mathbf{w}})|| \mathbb{P}(\mathbf{y}^{c}|\hat y^{c}_{s,\mathbf{w}})).
\end{equation}
\textbf{Without $y^c$.} The overall objective of GIC can be redefined as:
\begin{equation}
    \label{eq:GIC without y^val}
    \min_{\mathbf{w}} H(y^{tr},\hat y^{tr}_{s,\mathbf{w}})- \gamma \mathrm{KL}(\mathbb{P}(\mathbf{z}^{tr}|\hat y^{tr}_{s,\mathbf{w}})|| \mathbb{P}(\mathbf{z}^{c}|\hat y^{c}_{s,\mathbf{w}})).
\end{equation}
When implementing the objective \eqref{eq:GIC with y^val} and \eqref{eq:GIC without y^val}, we further adopt a more computationally tractable form involving the difference between joint and marginal distributions to replace conditional probability distributions. Taking the case with the label $y^c$ as an example, we consider 
\begin{equation}
\label{eq:diff-con}
\begin{split}
    &\mathrm{KL}(\mathbb{P}(y^{tr}|\hat y^{tr}_{s,\mathbf{w}})|| \mathbb{P}(y^{c}|\hat y^{c}_{s,\mathbf{w}})) \\
    &= \mathrm{KL}(\mathbb{P}(y^{tr},\hat y^{tr}_{s,\mathbf{w}})|| \mathbb{P}(y^{c},\hat y^{c}_{s,\mathbf{w}}))- \mathrm{KL}(\mathbb{P}(\hat y^{tr}_{s,\mathbf{w}})|| \mathbb{P}(\hat y^{c}_{s,\mathbf{w}})),
\end{split}
\end{equation}
and then the MINE algorithm \citep{belghazi2018mutual} is employed to estimate these $\mathrm{KL}(\cdot||\cdot)$ terms. The pseudocode, outlined in Algorithm \ref{alg:GIC}, presents a comprehensive procedure for utilizing GIC to infer group labels.

\begin{figure*}
\setlength{\abovecaptionskip}{-1cm}
  \centering
  \includegraphics[width=1.\textwidth, height=0.23\textheight]{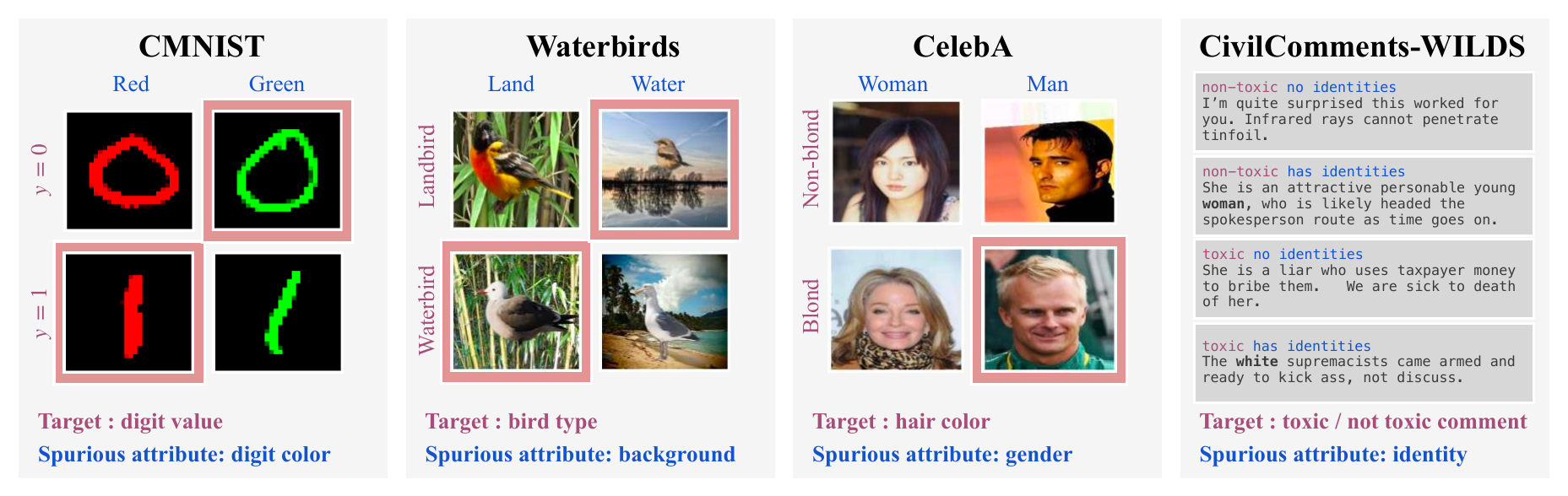}
  \caption{Visualization of evaluated datasets with minority groups marked by red boxes. The spurious attribute and targets exhibit strong spurious correlations, while these correlations typically does not hold for minority groups.}
  \label{fig:dataset}
\end{figure*}
\subsection{Learning Robust Classifier with GIC}
\label{Mitigating Spurious Correlation with GIC}
After obtaining the spurious attribute label $\hat{y}_{s,\mathbf{w}}$, we then infer the group label as $\hat g = (y, \hat{y}_{s,\mathbf{w}})$. The inferred groups $\hat g$ can be used with downstream invariant learning methods to learn invariant attributes and train a robust model $f_{\mathrm{robust}}$. For example, we can use Subsample technique to construct a balanced dataset by retaining all data from the smallest group and subsampling the data from the other inferred groups to match the same size for training $f_{\mathrm{robust}}$.

In \cref{Experiments}, we experiment with various downstream learning algorithms, demonstrating that the inferred groups from GIC can be flexibly utilized by diverse invariant learning algorithms to learn invariant features. This facilitates the training of robust models, denoted by $f_{\mathrm{robust}}$, which effectively mitigate spurious correlations.

\subsection{How to Obtain the Comparison Data}
\label{The Construction of Comparison Data}
Comparison data $\mathcal C$ with different group distributions from the training data is crucial in implementing GIC. The sources for comparison data are diverse, such as using a (labeled) validation data, or sampling from the unlabeled test data. In previous works, the validation set is often required to serve as a crucial basis for parameter selection \cite{liu2021just,zhang2022correct,creager2021environment}, and even directly participate in model training, aiding in group inference \cite{nam2022spread}.

In challenging scenarios where both the validation set and the test set are inaccessible, we can manually create comparison data from the training set. This can be accomplished by resampling the training dataset in a non-uniform manner, such as adjusting the sampling weight based on the trained ERM's prediction, as shown in JTT \cite{liu2021just}. Specifically, the error set (missclassification set) of a trained ERM often represents minority groups where the spurious correlation no longer exists. By sampling from the error set and non-error set, we can artificially construct comparison data with different group distributions.

Although non-uniform sampling from the training set can ensure the availability of labeled comparison data, we still emphasize the reason we consider the unlabeled comparison data, such as sampling from the test set, is that in the real world, unlabeled data is often cheaper, abundant and easier to obtain \cite{shejwalkar2023perils,gopfert2019can}. And subsequent analysis in Appendix \ref{app: The construction of comparison data} supports that abundant comparison data is beneficial for the performance of GIC. By incorporating unlabeled comparison data, the applicability of GIC is enhanced.

\subsection{The Relationship between GIC and ERM}
In this section, we highlight the relationship between GIC and ERM. When there is no difference in group distribution between training and comparison data, or the weighting parameter $\gamma = 0$, GIC degenerates to ERM in terms of group label inference.
the KL loss in \eqref{eq:GIC with y^val} and \eqref{eq:GIC without y^val} is minimal, indicating a reduced impact of the spurious term. Consequently, the CE loss becomes predominant, causing GIC to effectively become an ERM-based group inference method.

Although ERM can still be used for inferring group labels, it lacks the crucial spurious term necessary to violate the principle of invariant learning and to effectively identify spurious features. Thus, ERM-based inference should serve as the performance baseline for GIC's group label performance which is validated by subsequent experiments. Moreover, GIC can also be viewed as a special ERM-based group inference method where the spurious term functions as a regularization term. By incorporating insights from the comparison data, the spurious term encourages the trained neural network to differentiate between spurious and invariant features, thus enhancing group label inference.

\section{Experiments}
\label{Experiments}
Through our experimental evaluation, the primary goal is to address the following questions: (1) The effectiveness of GIC in mitigating spurious correlations: Can GIC successfully enhance worst-group accuracy and mitigate spurious correlation issues? (2) The accuracy of GIC in inferring group labels: Can GIC reliably predict group labels? (3) Analysis of misclassified cases: What factors contribute to misclassifications by GIC for certain instances?

Our experimental results will be presented in the order of the three questions mentioned above. In experiments, we use \(\mathrm{GIC}_{\mathcal{C}_y}\) to denote the scenario where the comparison data has true labels, and \(\mathrm{GIC}_{\mathcal{C}}\) to denote the scenario where the comparison data is unlabeled.

\begin{table*}
\centering
\caption{Average and worst-group accuracy comparison (\%). Baselines are divided into two types based on whether group labels are required, and we highlight the \underline{\textbf{1st}} worst-group and the \underline{{2nd}} worst-group results for the non-group label class. $\checkmark$ denotes the use of training/validation group labels for training. GIC demonstrates strong advantages in baselines without group labels, even competing with methods with group labels on certain datasets.}
\vskip 0.1in
{
\begin{tabular}{@{}lccc|cc|cc|cc@{}}
\toprule
\multirow{2}{*}{Method} & \multirow{2}{*}{\makecell{Group Labels\\Train / Val}} & \multicolumn{2}{c}{CMNIST} &  \multicolumn{2}{c}{\makecell{Waterbirds}}& \multicolumn{2}{c}{\makecell{CelebA}} & \multicolumn{2}{c}{\makecell{CivilComments}}\\
&  & Avg.& Worst & Avg. & Worst& Avg. & Worst& Avg. & Worst\\
\midrule
\multicolumn{10}{c}{\cellcolor{gray!25}\textit{Oracle Group labels are required}} \\
\midrule
GroupDRO & $\checkmark/\times$ & 74.4$\pm \scriptstyle{0.5}$ & 69.8$\pm \scriptstyle{2.6}$ &92.0$\pm \scriptstyle{0.6}$ & 89.9$\pm \scriptstyle{0.6}$&91.2$\pm \scriptstyle{0.4}$ & 87.2$\pm \scriptstyle{1.6}$&89.9$\pm \scriptstyle{0.5}$&70.0$\pm \scriptstyle{2.0}$\\
LISA& $\checkmark/\times$ & 74.0$\pm \scriptstyle{0.1}$ & 73.3$\pm \scriptstyle{0.2}$ & 91.8$\pm \scriptstyle{0.3}$ & 89.2$\pm \scriptstyle{0.6}$&92.4$\pm \scriptstyle{0.4}$ & 89.3$\pm \scriptstyle{1.1}$&89.2$\pm \scriptstyle{0.9}$ & 72.6$\pm \scriptstyle{0.1}$\\
DFR& $\times/\checkmark$ &72.2$\pm \scriptstyle{1.1}$ &70.6$\pm \scriptstyle{1.1}$ & 94.2$\pm \scriptstyle{0.4}$ & 92.9$\pm \scriptstyle{0.2}$ & 91.3$\pm \scriptstyle{0.3}$ & 88.3$\pm \scriptstyle{1.1}$&87.2$\pm \scriptstyle{0.3}$ & 70.1$\pm \scriptstyle{0.8}$\\
SSA & $\times/\checkmark$ & 75.0$\pm \scriptstyle{0.3}$ & 71.1$\pm \scriptstyle{0.4}$  &92.2$\pm \scriptstyle{0.9}$ & 89.0$\pm \scriptstyle{0.6}$&92.8$\pm \scriptstyle{0.1}$ & 89.8$\pm \scriptstyle{1.3}$&88.2$\pm \scriptstyle{2.0}$ & 69.9$\pm \scriptstyle{2.0}$\\
\midrule
\rowcolor{gray!25}\multicolumn{10}{c}{\textit{Oracle Group labels are not required}} \\
\midrule
ERM& $\times/\times$ &12.9$\pm \scriptstyle{0.8}$& 3.4$\pm \scriptstyle{0.9}$ &97.3$\pm \scriptstyle{1.0}$ & 62.6$\pm \scriptstyle{0.3}$&94.9$\pm \scriptstyle{0.3}$ & 47.7$\pm \scriptstyle{2.1}$&92.1$\pm \scriptstyle{0.4}$ & 58.6$\pm \scriptstyle{1.7}$\\
JTT& $\times/\times$ & 76.4$\pm \scriptstyle{3.3}$ & 67.3$\pm \scriptstyle{5.1} $ &89.3$\pm \scriptstyle{0.7}$ & 83.8$\pm \scriptstyle{1.2}$&88.1$\pm \scriptstyle{0.3}$& 81.5$\pm \scriptstyle{1.7}$&91.1&69.3\\
EIIL& $\times/\times$ & 74.1$\pm \scriptstyle{0.2}$ & 65.5$\pm \scriptstyle{5.1}$ &96.5$\pm \scriptstyle{0.2}$ & 77.2$\pm \scriptstyle{1.0}$&85.7$\pm \scriptstyle{0.1}$& 81.7$\pm \scriptstyle{0.8}$&90.5$\pm \scriptstyle{0.2}$ & 67.0$\pm \scriptstyle{2.4}$ \\
CnC & $\times/\times$ &  - & - &90.9$\pm \scriptstyle{0.1}$ & \underline{\textbf{88.5}}$\pm \scriptstyle{0.3}$&89.9$\pm \scriptstyle{0.5}$ & 88.8$\pm \scriptstyle{0.9}$&81.7$\pm \scriptstyle{0.5}$ & 68.9$\pm \scriptstyle{2.1}$\\
\midrule
$\mathrm{GIC}_{\mathcal{C}_y}$-M & $\times/\times$  &{73.2}$\pm \scriptstyle{0.2}$&\underline{\textbf{72.2}}$\pm \scriptstyle{0.5}$ &89.6$\pm \scriptstyle{1.3}$ & \underline{{86.3}}$\pm \scriptstyle{0.1}$ &91.9$\pm \scriptstyle{0.1}$&\underline{{89.4}}$\pm \scriptstyle{0.2}$&90.0$\pm \scriptstyle{0.2}$&\underline{\textbf{72.5}}$\pm \scriptstyle{0.3}$\\
$\mathrm{GIC}_{\mathcal{C}}$-M & $\times/\times$  & 73.1$\pm \scriptstyle{0.5}$&\underline{{71.7}}$\pm \scriptstyle{0.3}$ &89.3$\pm \scriptstyle{0.8}$&  85.4$\pm \scriptstyle{0.1}$&92.1$\pm \scriptstyle{0.1}$&\underline{\textbf{89.5}}$\pm \scriptstyle{0.0}$&89.7$\pm \scriptstyle{0.0}$&\underline{{72.3}}$\pm \scriptstyle{0.2}$\\
\bottomrule
\end{tabular}}
\label{tab:acc-1}
\end{table*}

\subsection{Experiments on Synthetic 2D Data}
\label{Synthetic 2D Data}
We start with a synthetic 2D dataset to demonstrate how GIC helps train a more robust model by learning spurious attributes. The synthetic dataset consists of training ($\mathcal{D}^{tr}$), validation ($\mathcal{D}^{val}$ which is the labeled comparison data $\mathcal{D}^{c}$), and test ($\mathcal{D}^{ts}$) sets. There is a spurious correlation in $\mathcal{D}^{tr}$, where true labels are highly correlated with the spurious attribute $\mathbf{x}_1$, while $\mathcal{D}^{c}$ and $\mathcal{D}^{ts}$ have correlations with the invariant feature $\mathbf{x}_2$. More details about the synthetic data can be found in Appendix \ref{app:Synthetic Toy Data}. \cref{fig:toy example} shows the decision boundaries of the traditional ERM model $f_\mathrm{ERM}$, the spurious attribute classifier $f_{\mathrm{GIC}}$, and the retrained robust model $f_\mathrm{robust}$ using inferred group labels from $f_{\mathrm{GIC}}$ and the Subsample strategy. We also train ERM models exclusively on the invariant feature (denoted as $f_\mathrm{invariant}$) and the spurious feature (denoted as $f_\mathrm{spurious}$).

We observe that the spurious attribute has a significant negative impact on $f_{\mathrm{ERM}}$, resulting in its decision boundary that is far from $f_{\mathrm{invariant}}$. In contrast,
the decision boundary of $f_{\mathrm{robust}}$ is much closer to $f_{\mathrm{invariant}}$, indicating its robustness achieved by leveraging the more balanced training data constructed by GIC. Furthermore, $f_{\mathrm{GIC}}$ has a decision boundary that is much closer to $f_{\mathrm{spurious}}$ compared to $f_{\mathrm{ERM}}$, indicating that GIC is better at capturing spurious attributes than ERM-based group inference methods.

\subsection{Experiments on Real-World Data}
\label{Experiments on Real-World Data}
\textbf{Real-World Datasets.} We explore datasets in image and text classification that exhibit spurious correlations. For instance, CMNIST \citep{arjovsky2019invariant} involves digit recognition with spurious features where digit colors (red or green) are linked to digit values. Waterbirds \citep{sagawa2019distributionally} associates bird types with a spurious background attribute (water or land). CelebA \citep{liu2015deep} focuses on hair color recognition influenced by spurious gender-related features. CivilComments-WILDS \cite{borkan2019nuanced,koh2021wilds} aims to distinguish toxic from non-toxic online comments, with labels spuriously correlated with mentions of demographic identities. \cref{fig:dataset} represents the spurious attributes and training targets of these datasets. Appendix \ref{app:Dataset Details} provides a detailed group distribution description of these datasets. In the main text, we primarily use labeled and unlabeled validation sets as comparison data to demonstrate the effectiveness of GIC when comparison data is directly available. The experimental evidence in Appendix \ref{app: The construction of comparison data} further illustrates the effectiveness of constructing comparison data through non-uniform sampling from the training set. 

\textbf{Baselines.} For methods that address spurious correlation while requiring group labels, we consider GroupDRO \cite{sagawa2019distributionally}, DFR \cite{kirichenko2022last}, LISA \cite{yao2022improving}, and SSA \cite{nam2022spread}. We also compare against methods that tackle spurious correlation without requiring group labels, namely, ERM, JTT \cite{liu2021just}, CnC \cite{zhang2022correct}, and EIIL \cite{creager2021environment}. ERM serves as a lower bound baseline, representing a basic training method without specific techniques to improve the accuracy of the worst-group. Additionally, we evaluate the accuracy of GIC in inferring group labels by comparing it to ERM and EI (the inferring group method of EIIL) and assess the GIC's performance when using Mixup, Subsample, Upsample, and GrouDRO as downstream invariant learning methods. Detailed information on these methods can be found in Appendix \ref{Downstream invariant learning methods}.

\textbf{Model Training.} Following stages outlined in Algorithm \ref{alg:GIC}, we provide a details of models and parameters used, particularly the selection of crucial hyperparameters training epoch $K$ and weight parameter $\gamma$ in Appendix \ref{app:Training Details}.

\textbf{Evaluation.} In order to assess how well GIC tackles spurious correlations, we report the average and worst-group accuracy for all baselines. Furthermore, to evaluate the accuracy of GIC in inferring group labels, we report the precision (proportion of correctly inferred examples belonging to the true minority group) and recall (proportion of examples from the true minority group correctly inferred) of minority groups. We focus on minority groups because their small sample size presents challenges for accurate group inference and successfully identifying these minority groups is crucial as spurious correlations often do not hold for them.

\subsection{The Effectiveness of GIC in Mitigating Spurious Correlations}
\label{Main Results}

\begin{table*}
\centering
\caption{Worst-group accuracy comparison (\%). We highlight the \textbf{1st} worst-group accuracy, mainly derived from GIC. The complete results including average-group accuracy are presented in Appendix \ref{Complete Results of group inference method comparison}. }
\vskip 0.1in
{
\begin{tabular}{@{}lcc|cc|cc|ccccc@{}}
\toprule
\multirow{2}{*}{Method} &
\multicolumn{2}{c}{\textit{+GroupDRO}}      & \multicolumn{2}{c}{\textit{+Subsample}} &
\multicolumn{2}{c}{\textit{+Upsample}} & \multicolumn{2}{c}{\textit{+Mixup}} \\
&   Waterbirds & CelebA & Waterbirds & CelebA & Waterbirds & CelebA & Waterbirds & CelebA\\
\midrule
ERM  &75.6$\pm \scriptstyle{0.4}$&77.2$\pm \scriptstyle{0.1}$&79.4$\pm \scriptstyle{0.3}$ &78.5$\pm \scriptstyle{0.1}$&83.8$\pm \scriptstyle{1.2}$&81.5$\pm \scriptstyle{1.7}$& 82.1$\pm \scriptstyle{0.8}$ &80.6$\pm \scriptstyle{1.7}$\\
EI& 77.2$\pm \scriptstyle{1.0}$&81.7$\pm \scriptstyle{0.8}$&81.9$\pm \scriptstyle{1.4}$ &82.8$\pm \scriptstyle{0.5}$&81.3$\pm \scriptstyle{0.7}$&84.8$\pm \scriptstyle{0.2}$&85.7$\pm \scriptstyle{0.4}$ &84.9$\pm \scriptstyle{3.7}$\\
$\mathrm{GIC}_{\mathcal{C}_y}$&{\textbf{80.2}}$\pm \scriptstyle{0.1}$& {\textbf{82.1}}$\pm \scriptstyle{0.3}$&{\textbf{83.5}}$\pm \scriptstyle{0.8}$& {\textbf{86.1}}$\pm \scriptstyle{2.2}$ & {\textbf{84.1}}$\pm \scriptstyle{0.0}$&87.2$\pm \scriptstyle{0.0}$& {\textbf{86.3}}$\pm \scriptstyle{0.1}$ &89.4$\pm \scriptstyle{0.2}$\\
$\mathrm{GIC}_{\mathcal{C}}$&79.2$\pm \scriptstyle{0.4}$& 79.7$\pm \scriptstyle{0.6}$ & 82.1$\pm \scriptstyle{1.1}$& 83.1$\pm \scriptstyle{0.3}$&82.1$\pm \scriptstyle{0.7}$&{\textbf{87.8}}$\pm \scriptstyle{1.1}$&85.4$\pm \scriptstyle{0.1}$&{\textbf{89.5}}$\pm \scriptstyle{0.0}$\\
\bottomrule
\end{tabular}}
\label{tab:acc-2}
\end{table*}

\cref{tab:acc-1} showcases the average and worst-group accuracies for all methods. We specifically highlight GIC's performance when combined with Mixup (denoted as $\mathrm{GIC}_{\mathcal{C}_y}$-M and $\mathrm{GIC}_{\mathcal{C}}$-M). In comparison to methods that train without leveraging group information, $\mathrm{GIC}_{\mathcal{C}_y}$-M and $\mathrm{GIC}_{\mathcal{C}}$-M consistently achieve higher worst-group accuracy across all datasets. Notably, even when compared to methods that incorporate group information during training, $\mathrm{GIC}_{\mathcal{C}_y}$-M and $\mathrm{GIC}_{\mathcal{C}}$-M deliver impressive results, particularly on CelebA and CivilComments datasets, where $\mathrm{GIC}_{\mathcal{C}_y}$-M almost matches the performance of baselines that utilize group labels. Furthermore, we observe that methods without group labels demonstrate weaker performance on worst-group accuracy compared to oracle group label-based methods. This discrepancy is especially pronounced when employing the same invariant learning algorithm (e.g., GroupDRO and EIIL, $\mathrm{GIC}_{\mathcal{C}_y}$-M and LISA). It underscores the necessity of further enhancing the accuracy of inferred group labels to boost the worst-group accuracy. The baselines, including JTT, CNC, EIIL, and SSA, tune hyperparameters and employ early stopping based on the highest worst-group accuracy observed on the validation set. Similarly, we also consider using the highest worst-group accuracy of the validation set with group labels as a criterion to ascertain the optimal number of training epochs. The ablation studies detailed in Appendix \ref{Ablation Study} highlight the importance of early stopping as a strategy in the GIC framework.

\begin{figure} 
  \centering
  \includegraphics[width=.45\textwidth, height=0.25\textheight]{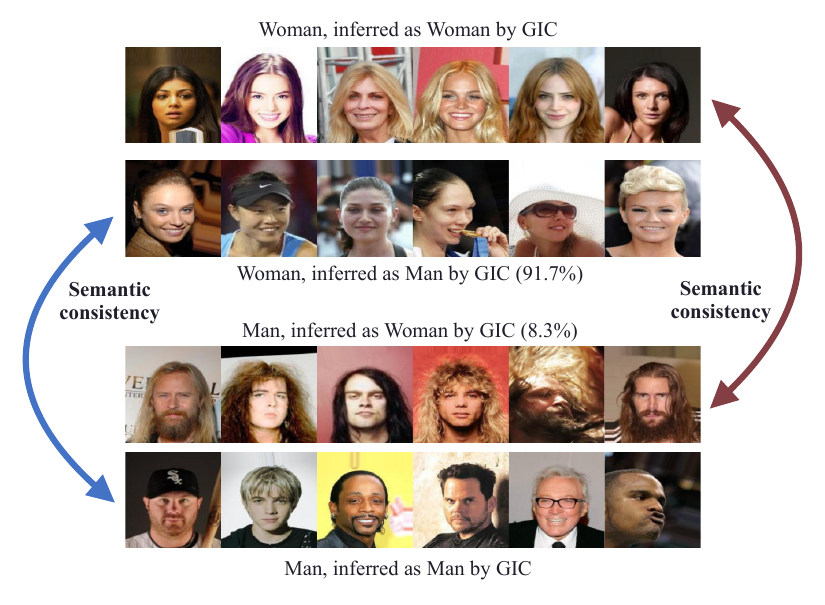}
  \caption{Misclassified samples on CelebA. The semantic consistency in GIC leads to the misclassification of women with short hair (a typical characteristic of males) as males (91.7\%), and men with long hair (a typical characteristic of females) as females (8.3\%).}
  \label{fig:Error Case on CelebA}
\end{figure}

\subsection{The Accuracy of GIC in Inferring
Group Labels}
\label{The scalability of GIC}

We then delve deeper into comparing the performance of GIC, ERM, and EI in inferring group labels of minority groups. In \cref{fig:sacc}, we find that GIC exhibits higher precision compared to the baselines on almost all datasets. This higher precision suggests that the minority groups estimated by GIC have low redundancy, thereby increasing the possibility of utilizing true minority group examples to assist in training robust models for downstream tasks. Furthermore, GIC's primary advantage lies in its high recall, consistently maintained at over 60\% and even surpassing 80\% in CMNIST, CelebA, and CivilComments. The high recall rate indicates the estimated minority groups by GIC contain more diverse samples from true minority groups which can provide more discriminative information for downstream invariant learning tasks \cite{gong2019diversity}.

In \cref{tab:acc-2}, we present our findings on combining GIC, ERM, and EI with various invariant learning algorithms, including GroupDRO \cite{duchi2019distributionally}, Subsample \cite{kirichenko2022last}, and Upsample \cite{liu2021just}. Our worst-group results show that GIC consistently outperforms the baselines across different downstream algorithms, further emphasizing its superiority in group inference and worst-group performance improvement. The ablation results concerning the use of the early stopping strategy when training robust models are also shown in the Appendix \ref{Ablation Study}.

\subsection{Error Case Analysis}
\label{Error Case}
In this section, we then focus on the underlying factors contributing to these errors via visualizing the misclassified samples. \cref{fig:Error Case on CelebA} shows examples from the CelebA dataset, where the spurious attribute is gender (woman and man), and the association between hair color and gender is considered spurious, such as blonde hair is correlated with women. We note an interesting phenomenon known as semantic consistency in GIC. For instance, GIC misclassifies women with short hair as men, who bear a strong semantic resemblance to correctly classified male samples with short hair. This phenomenon is also evident in the waterbirds dataset, as detailed in Appendix \ref{Error Case}, where GIC tends to classify water backgrounds with prominent land elements (such as trees) as land backgrounds. This semantic consistency significantly contributes to prediction errors in GIC and has a negative effect on downstream algorithms that rely on sampling. It can lead to an overrepresentation of majority group samples ([blond, women]) within the estimated minority group ([blond, men]), exacerbating group imbalances during resampling.


\begin{figure}[]
  \centering
  \includegraphics[width=0.48\textwidth, height=0.2\textheight]{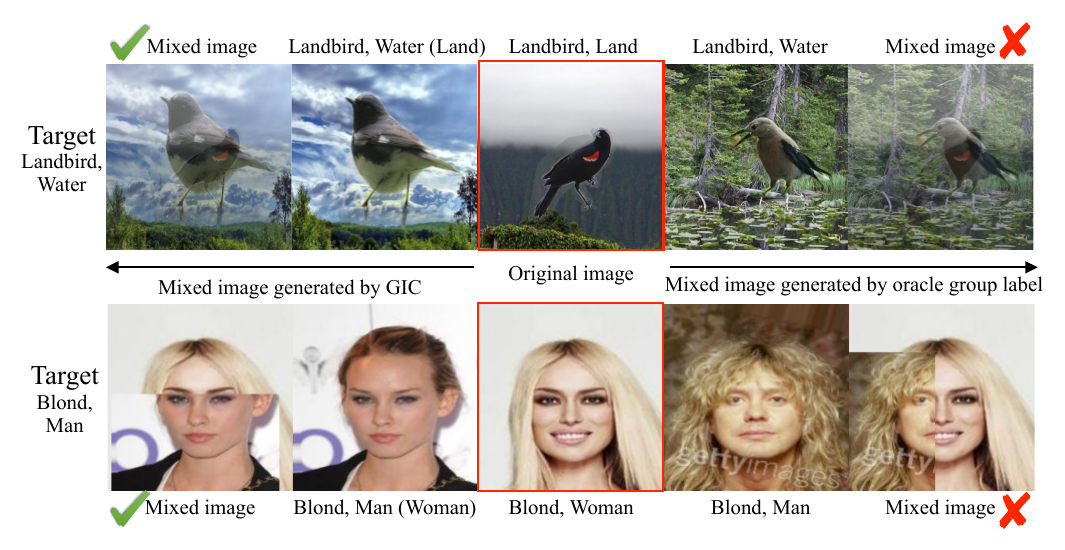}
  \caption{GIC generates better mixed images. By leveraging the high semantic consistency in image recognition, spurious attributes and true labels are decoupled in mixed images generated by GIC. Various mixing techniques are employed to handle different datasets, as detailed in \cref{Downstream invariant learning methods}.}
  \label{fig:erorr case-celeba-mixup}
\end{figure}

However, for invariant methods like Mixup \cite{yao2022improving} that aim to disrupt spurious correlations between spurious attributes and true labels for invariant learning, high semantic consistency is beneficial. We then show how GIC's semantic consistency leads to better mixed images compared to using oracle group labels. In \cref{fig:erorr case-celeba-mixup}, Mixup disrupts the spurious correlation between blonde hair and women by mixing samples from the same class (blond) but with different spurious attributes (man). When sampling from the oracle blond man group, long-haired men may be selected, resulting in mixed images that still retain the typical woman attribute (long hair). However, by using the blond man sample inferred by GIC, such as the short-haired woman in \cref{fig:erorr case-celeba-mixup}, the generated mixed samples is more closely resemble blonde men compared to using oracle group labels. Similarly, advantages can also occur in the waterbirds dataset as shown in \cref{fig:erorr case-celeba-mixup}. These findings may explain the better performance of $\mathrm{GIC}_{\mathcal{C}_y}$-M than LISA (which uses oracle group labels with Mixup) in \cref{tab:acc-1}, even though GIC's precision and recall are not at 100\%. 

\section{Discussion}
\label{Discussion}
In this work, we introduce GIC, a novel method for better inferring group labels and mitigating spurious correlations without any additional information requirement. Experiments on synthetic and real data reveal the enhancement of GIC in group label inference and its flexibility in integration with various invariant learning algorithms. Interestingly, our analysis of misclassification cases reveals the semantic consistency phenomenon in GIC, which effectively disrupts spurious correlations and facilitates invariant learning.

Comparison data is essential for GIC and can be the labeled validation set, the subset of the unlabeled test set, or constructed non-uniformly from the training set, making GIC universally applicable. Additional experiments in Appendix \ref{app: The construction of comparison data} demonstrate the feasibility of constructing comparison data non-uniformly from the training set, achieving comparable performance on worst-group accuracy as directly using an validation set as comparison data. Moreover, Appendix \ref{app: The construction of comparison data} further emphasizes that while non-uniform sampling from training data can construct comparison data, considering larger and cheaper unlabeled data as comparison data is necessary since an increased sample size of comparison data improves GIC's performance. We also emphasize that the group distribution difference between comparison and training data can be subtle. \cref{Main Results} shows GIC's effectiveness on CelebA and Civilcomments, despite the high similarity in group distributions between the training and comparison data. In Appendix \ref{app:distribution diff}, we further investigate the positive impact of group differences on GIC's performance. We propose readjusting the comparison data's group distribution based on GIC's inferred groups to further enhance the difference in group distribution and improve worst-group performance. This approach is supported by experiments detailed in Appendix \ref{app:distribution diff}.

\newpage
\section*{Acknowledgements}
We would like to thank the anonymous reviewers and area chairs for their helpful comments. YH and DZ are supported by NSFC 62306252 and the central fund from HKU IDS.


\section*{Impact Statement}


This paper presents work whose goal is to advance the field of 
Machine Learning. There are many potential societal consequences 
of our work, none which we feel must be specifically highlighted here.

\nocite{langley00}
\bibliography{GIC_CameraReady}
\bibliographystyle{icml2024}

\newpage
\appendix
\onecolumn
\section{Proof Details}
\label{app:Proof Details} 
Given $y^{tr}$, the maximization of mutual information $I(y^{tr};\hat y^{tr}_{s,\mathbf{w}})$ can be transformed into the minimization of cross-entropy $H(y^{tr},\hat y^{tr}_{s,\mathbf{w}})$.

To demonstrate this assertion, we establish the following lemma:

\begin{lemma}
\label{proof of Lower Bound of Correlation Term}
    Given $y^{tr}$, the correlation term is lower bounded by the difference between the entropy of $y^{tr}$ and the cross-entropy between $y^{tr}$ and $\hat y^{tr}_{s,\mathbf{w}}$:
    \begin{equation*}
    I(y^{tr};\hat y^{tr}_{s,\mathbf{w}}) \geq H(y^{tr})-H(y^{tr},\hat y^{tr}_{s,\mathbf{w}}).
    \end{equation*}
\end{lemma}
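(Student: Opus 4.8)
The plan is to reduce the claim to the elementary fact that conditional entropy never exceeds cross-entropy, which in turn follows from the non-negativity of the Kullback--Leibler divergence (Gibbs' inequality). First I would invoke the standard decomposition of mutual information, $I(y^{tr};\hat y^{tr}_{s,\mathbf{w}}) = H(y^{tr}) - H(y^{tr}\mid \hat y^{tr}_{s,\mathbf{w}})$, where $H(y^{tr}\mid\hat y^{tr}_{s,\mathbf{w}})$ denotes the conditional entropy. Substituting this into the desired inequality and cancelling the common term $H(y^{tr})$, the lemma becomes equivalent to the single statement
\[
H(y^{tr}\mid \hat y^{tr}_{s,\mathbf{w}}) \le H(y^{tr},\hat y^{tr}_{s,\mathbf{w}}),
\]
i.e. the conditional entropy of the label given the predicted spurious label is dominated by the cross-entropy between them. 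This is the only inequality that needs to be established.

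Next I would prove this inequality by exhibiting the gap as a (conditional) KL divergence. Writing $p(y\mid a)$ for the true conditional law of $y^{tr}$ given $\hat y^{tr}_{s,\mathbf{w}}=a$ and $q(\cdot\mid a)$ for the predictive distribution that enters the cross-entropy, for each fixed $a$ Gibbs' inequality gives $-\sum_{y}p(y\mid a)\log q(y\mid a) \ge -\sum_{y}p(y\mid a)\log p(y\mid a)$, the difference being exactly $\mathrm{KL}\big(p(\cdot\mid a)\,\|\,q(\cdot\mid a)\big)\ge 0$. Taking the expectation over the marginal law of $\hat y^{tr}_{s,\mathbf{w}}$ then yields
\[
H(y^{tr},\hat y^{tr}_{s,\mathbf{w}}) - H(y^{tr}\mid \hat y^{tr}_{s,\mathbf{w}}) = \mathbb{E}_{a}\big[\mathrm{KL}(p(\cdot\mid a)\,\|\,q(\cdot\mid a))\big] \ge 0,
\]
which is precisely the required inequality. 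Here I use that $y^{tr}$ and $\hat y^{tr}_{s,\mathbf{w}}$ share the same finite label alphabet, so that the cross-entropy is well defined; this holds in all the settings considered, where the spurious attribute and the true label are binary.

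Finally I would combine the two steps: the reduction of the first paragraph together with the KL-non-negativity of the second immediately delivers $I(y^{tr};\hat y^{tr}_{s,\mathbf{w}}) \ge H(y^{tr}) - H(y^{tr},\hat y^{tr}_{s,\mathbf{w}})$. I would also record the practical payoff that motivates the lemma: since $H(y^{tr})$ does not depend on $\mathbf{w}$, maximizing this lower bound is equivalent to minimizing the cross-entropy $H(y^{tr},\hat y^{tr}_{s,\mathbf{w}})$, which justifies the replacement used in Equations \eqref{eq:GIC with y^val} and \eqref{eq:GIC without y^val}. The only delicate point --- and the step I would be most careful about --- is the bookkeeping in the definitions: one must fix precisely which distributions play the roles of ``true'' and ``model'' in the cross-entropy so that their difference from the conditional entropy is genuinely a KL divergence. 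Once this pairing is made explicit, the argument is a one-line application of Gibbs' inequality with no further estimation required.
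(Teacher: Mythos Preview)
Your proposal is correct and shares the same opening move as the paper: both start from $I(y^{tr};\hat y^{tr}_{s,\mathbf{w}}) = H(y^{tr}) - H(y^{tr}\mid \hat y^{tr}_{s,\mathbf{w}})$ and reduce the lemma to the single inequality $H(y^{tr}\mid \hat y^{tr}_{s,\mathbf{w}}) \le H(y^{tr},\hat y^{tr}_{s,\mathbf{w}})$. The difference lies in how that inequality is established. The paper is explicit that the cross-entropy is between the \emph{marginal} laws, $H(y^{tr},\hat y^{tr}_{s}) = -\sum \mathbb{P}(y^{tr})\log \mathbb{P}(\hat y^{tr}_{s})$, and then chains two elementary facts: first $H(y^{tr},\hat y^{tr}_{s}) = \mathrm{KL}(\mathbb{P}(y^{tr})\|\mathbb{P}(\hat y^{tr}_{s})) + H(y^{tr})$, then $H(y^{tr}) \ge H(y^{tr}\mid \hat y^{tr}_{s})$ (conditioning reduces entropy), and finally $\mathrm{KL}\ge 0$. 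You instead express the gap directly as an expected conditional KL, $\mathbb{E}_{a}\big[\mathrm{KL}(p(\cdot\mid a)\,\|\,q(\cdot\mid a))\big]$, and invoke Gibbs' inequality once. Your route is a touch more economical, but the paper's is more concrete about what $q$ is: the ``predictive distribution'' you leave open is, in the paper's reading, simply the marginal $\mathbb{P}(\hat y^{tr}_{s}=\cdot)$, constant in $a$. The bookkeeping concern you flag at the end is precisely the place where the paper pins things down and your proposal does not; once you set $q(\cdot\mid a)=\mathbb{P}(\hat y^{tr}_{s}=\cdot)$, your argument and the paper's coincide up to which of the two standard inequalities is made explicit.
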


\begin{proof}
For clearer expression, we omit the subscript $\mathbf{w}$ representing the training parameters in $\hat y_{s,\mathbf{w}}$.
We first expand the mutual information term:
\begin{equation*}
 \begin{split}
    I(y^{tr};\hat y^{tr}_{s}) &= H(y^{tr})-H(y^{tr}|\hat y^{tr}_{s})\\
    \end{split}
    \end{equation*}
Since, the cross-entropy can be expanded as:
\begin{equation*}
\begin{split}
    H(y^{tr},\hat y^{tr}_{s}) &= -\sum \mathbb{P}(y^{tr})\log \mathbb{P}(\hat y^{tr}_{s}) \\
    &= \sum \mathbb{P}(y^{tr})\log \frac{\mathbb{P}(y^{tr})}{\mathbb{P}(\hat y^{tr}_{s})} - \sum \mathbb{P}(y^{tr})\log \mathbb{P}(y^{tr}) \\
    &= \mathrm{KL}(y^{tr}||\hat y^{tr}_{s}) + H(y^{tr}) \\
    &\geq \mathrm{KL}(y^{tr}||\hat y^{tr}_{s}) + H(y^{tr}|\hat y^{tr}_{s})\\
    &\geq H(y^{tr}|\hat y^{tr}_{s}).
\end{split}
\end{equation*}
Then we have,
\begin{equation*}
 \begin{split}
    I(y^{tr};\hat y^{tr}_{s}) &= H(y^{tr})-H(y|\hat y^{tr}_{s})\geq H(y^{tr})-H(y^{tr},\hat y^{tr}_{s}).
    \end{split}
    \end{equation*}
According to \cref{proof of Lower Bound of Correlation Term}, when $y^{tr}$ is given, maximizing the mutual information $I(y^{tr};\hat y^{tr}_{s,\mathbf{w}})$ can be transformed into minimizing $H(y^{tr},\hat y^{tr}_{s,\mathbf{w}})$, which is more computationally tractable in practice. 
\end{proof}
\subsection{Proof of Theorem \ref{Lower Bound of Spurious Term without Accessible Y}}
\label{app:proof for theorem 1}

To prove Theorem \ref{Lower Bound of Spurious Term without Accessible Y}, we first demonstrate the causal structure between $y_s$, $\mathbf{z}$, and $y$. Following the work \cite{li2022invariant}, we assume $y_s \leftarrow \mathbf{z} \rightarrow y$. The fork causal structure \citep{lagnado2019learning} between $y_s$ , $\mathbf{z}$ and $y$ exhibits the following properties:

\begin{property}
\label{pro:fork causal}
The fork causal relationship $y_s \leftarrow \mathbf{z} \rightarrow y$ adheres to the following properties:
\begin{enumerate}
    \item  $y_s \not\perp y$ means the true label $y$ and the spurious label $y_s$ are dependent. 

    \item $y \perp y_s \, | \, \mathbf{z}$ means given the representation $\mathbf{z}$, the true label $y$ and the spurious label $y_s$ are conditionally independent. 
\end{enumerate}

\end{property}

Naturally, the proxy $\hat y_{s,\mathbf{w}}$ for the true spurious attribute label $y_s$ should also satisfy the aforementioned causal properties.

Then, we establish the following two lemmas:

\begin{lemma}
\label{lemma:theorem-1}
    Given representations $\mathbf{z}^{tr}$ and $\mathbf{z}^{c}$, maximizing the spurious term is equivalent to maximizing the following expression:
  \begin{equation}
\begin{split}
     \max_{\mathbf{w}} \mathrm{KL}(\mathbb{P}(y^{tr}|y^{tr}_{s,\mathbf{w}})|| \mathbb{P}(y^{c}|y^{c}_{s,\mathbf{w}})) 
   \Leftrightarrow \max_{\mathbf{w}} \mathrm{KL}(\mathbb{P}(y^{tr},\mathbf{z}^{tr}|y^{tr}_{s,\mathbf{w}})|| \mathbb{P}(y^{c},\mathbf{z}^{c}|y^{c}_{s,\mathbf{w}}))
\end{split}
\end{equation}
\end{lemma}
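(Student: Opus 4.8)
The plan is to prove the equivalence by exhibiting the joint divergence as the spurious term plus a residual divergence, and then arguing---via the fork causal structure of \cref{pro:fork causal}---that this residual does not steer the maximization over $\mathbf{w}$. The only real tool needed is the chain rule for KL divergence, applied to the conditional joint law of $(y,\mathbf{z})$ given the predicted spurious label.

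First I would factor, on both the training and comparison sides, $\mathbb{P}(y,\mathbf{z}\mid\hat y_{s,\mathbf{w}})=\mathbb{P}(y\mid\hat y_{s,\mathbf{w}})\,\mathbb{P}(\mathbf{z}\mid y,\hat y_{s,\mathbf{w}})$ and invoke the chain rule to write
\begin{equation*}
\mathrm{KL}\big(\mathbb{P}(y^{tr},\mathbf{z}^{tr}\mid\hat y^{tr}_{s,\mathbf{w}})\,\big\|\,\mathbb{P}(y^{c},\mathbf{z}^{c}\mid\hat y^{c}_{s,\mathbf{w}})\big)=\mathrm{KL}\big(\mathbb{P}(y^{tr}\mid\hat y^{tr}_{s,\mathbf{w}})\,\big\|\,\mathbb{P}(y^{c}\mid\hat y^{c}_{s,\mathbf{w}})\big)+R(\mathbf{w}),
\end{equation*}
where the first summand is precisely the spurious term and
\begin{equation*}
R(\mathbf{w})=\mathbb{E}_{y\sim\mathbb{P}(y\mid\hat y^{tr}_{s,\mathbf{w}})}\Big[\mathrm{KL}\big(\mathbb{P}(\mathbf{z}^{tr}\mid y,\hat y^{tr}_{s,\mathbf{w}})\,\big\|\,\mathbb{P}(\mathbf{z}^{c}\mid y,\hat y^{c}_{s,\mathbf{w}})\big)\Big]\ge 0.
\end{equation*}
This is a pure distributional identity and requires no assumptions, so it is the routine part.

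The crux---and the step I expect to be the main obstacle---is to show that $R(\mathbf{w})$ does not influence $\arg\max_{\mathbf{w}}$. Here I would use that $\hat y_{s,\mathbf{w}}=f_{\mathrm{GIC}}(\mathbf{z};\mathbf{w})$ is a (measurable) function of $\mathbf{z}$, so conditioning on $\hat y_{s,\mathbf{w}}$ merely coarsens $\mathbf{z}$, together with \cref{pro:fork causal}, whose second clause, inherited by the proxy, gives $y\perp\hat y_{s,\mathbf{w}}\mid\mathbf{z}$, i.e.\ $\mathbb{P}(y\mid\mathbf{z},\hat y_{s,\mathbf{w}})=\mathbb{P}(y\mid\mathbf{z})$. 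Re-expressing $\mathbb{P}(\mathbf{z}\mid y,\hat y_{s,\mathbf{w}})$ through Bayes' rule and the invariance of the feature-to-label mechanism $\mathbb{P}(y\mid\mathbf{z})$ across $\mathcal{D}$ and $\mathcal{C}$, the genuinely $y$-dependent (hence invariant-attribute) contribution to $R$ factors out, so that $R$ tracks only the shift in $\mathbf{z}$ carried by $\hat y_{s,\mathbf{w}}$; maximizing the joint divergence therefore cannot grow without the spurious term growing as well. If a clean cancellation proves delicate, I would fall back on the already-established bound $R(\mathbf{w})\ge 0$: it makes the joint divergence an upper bound on the spurious term that is tight exactly when $\hat y_{s,\mathbf{w}}$ encodes purely spurious information, which suffices to justify the joint objective as an equivalent surrogate for the maximization. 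Either way, the chain-rule identity is the backbone and \cref{pro:fork causal} is what pins the residual to the invariant mechanism.
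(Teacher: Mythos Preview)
Your chain-rule decomposition is exactly the one the paper uses, and your residual $R(\mathbf{w})$ is the right object. The gap is in how you dispose of it. The paper does not argue that $R(\mathbf{w})$ ``does not steer the maximization''; it shows that $R(\mathbf{w})=0$ outright, by invoking the structural assumption that the within-group feature law is the same in the two datasets, i.e.\ $\mathbb{P}_{tr}(\mathbf{z}\mid y,\hat y_{s})=\mathbb{P}_{c}(\mathbf{z}\mid y,\hat y_{s})$ for every group $(y,\hat y_s)$. That single line collapses the residual and gives an exact equality between the two KL divergences, not merely coincidence of maximizers. You never invoke this assumption, and neither of your routes around it closes the argument.

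Your primary route---``$R$ tracks only the shift in $\mathbf{z}$ carried by $\hat y_{s,\mathbf{w}}$, so the joint cannot grow without the spurious term growing''---is not a proof of $\arg\max$ equivalence: even if $R$ depends on $\mathbf{w}$ only through the induced partition, a $\mathbf{w}$ that inflates $R$ while leaving the spurious term flat would still move the joint maximum away from the spurious maximum. Your fallback, $R(\mathbf{w})\ge 0$, gives only that the joint upper-bounds the spurious term, which says nothing about their maximizers coinciding (an upper bound can be maximized at a different point than the thing it bounds). The fix is simply to state and use the within-group invariance $\mathbb{P}_{tr}(\mathbf{z}\mid y,\hat y_{s})=\mathbb{P}_{c}(\mathbf{z}\mid y,\hat y_{s})$; with that, $R\equiv 0$ and the lemma is an identity, not an argmax statement.
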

\begin{proof}
As stated in \cref{sec:GIC: Group Inference via data Comparison}, different variable names are used to emphasize various data sources. In the proof section, we consider to a more formal expression for clarity. That is, we consider different distribution notations with the same variable names and also ignore the subscript $\mathbf{w}$.
\begin{equation}
\begin{split}
    &\mathbb{P}_{tr}(y|\hat y_{s}):=\mathbb P(y^{\mathrm{tr}}|\hat y^{tr}_{s})\\ &\mathbb{P}_{\mathrm{c}}(y|\hat y_{s}):=\mathbb P(y^{c}|
\hat y^{c}_{s}).
\end{split}
\end{equation}
According to the properties of KL divergence \citep{cover1999elements}, we have the following:
\begin{equation}
\begin{split}
    & \mathrm{KL}(\mathbb{P}_{tr}(y,\mathbf{z}|\hat y_{s})|| \mathbb{P}_c(y,\mathbf{z}|\hat y_{s}))
     -\mathrm{KL}(\mathbb{P}_{tr}(\mathbf{z}|y,\hat y_{s})|| \mathbb{P}_c(\mathbf{z}|y,\hat y_{s}))\\
     &=\mathbb{E}_{(y,\mathbf{z},\hat y_{s})}\Big[\log \frac{\mathbb{P}_{tr}(y,\mathbf{z}|\hat y_{s})}{\mathbb{P}_c(y,\mathbf{z}|\hat y_{s})}\Big]-\mathbb{E}_{(y,\mathbf{z},\hat y_{s})}\Big[\log \frac{\mathbb{P}_{tr}(\mathbf{z}|y,\hat y_{s})}{\mathbb{P}_c(\mathbf{z}|y,\hat y_{s})}\Big]\\
     &=\mathbb{E}_{(y,\mathbf{z},\hat y_{s})}\Big[\log \frac{\mathbb{P}_{tr}(y|\hat y_{s})}{\mathbb{P}_c(y|\hat y_{s})}\Big]\\
     &=\mathbb{E}_{(y,\hat y_{s})}\Big[\log \frac{\mathbb{P}_{tr}(y|\hat y_{s})}{\mathbb{P}_c(y|\hat y_{s})}\Big]\\
     &=\mathrm{KL}(\mathbb{P}_{tr}(y|\hat y_{s})|| \mathbb{P}_c(y|\hat y_{s})).
\end{split}
\end{equation}

Since term $\mathrm{KL}(\mathbb{P}_{tr}(\mathbf{z}|y,\hat y_{s})|| \mathbb{P}_c(\mathbf{z}|y,\hat y_{s}))$ captures the similarity between the same group in the training and validation sets, i.e., $\mathbb{P}_{tr}(\mathbf{z}|y,\hat y_{s}) = \mathbb{P}_c(\mathbf{z}|y,y_{s})$, we have
\begin{equation}
\begin{split}
\mathrm{KL}(\mathbb{P}_{tr}(\mathbf{z}|y,\hat y_{s})|| \mathbb{P}_c(\mathbf{z}|y,\hat y_{s}))
=\sum_{g \in \mathcal{G}} \mathbb{P}_{tr}((y,\hat y_{s})=g) \sum_z \mathbb{P}_{tr}(\mathbf{z}=z|(y,\hat y_{s})=g) \log \frac{\mathbb{P}_{tr}(\mathbf{z}=z|(y,\hat y_{s})=g)}{\mathbb{P}_c(\mathbf{z}=z|(y,y_{s})=g)}
=0.
\end{split}
\end{equation}
Therefore, maximizing the spurious term $\mathrm{KL}(\mathbb{P}_{tr}(y|\hat y_{s})|| \mathbb{P}_{c}(y|\hat y_{s}))$  is tantamount to maximizing $\mathrm{KL}(\mathbb{P}_{tr}(y,\mathbf{z}|\hat y_{s})|| \mathbb{P}_c(y,\mathbf{z}|\hat y_{s}))$.
\end{proof}


\begin{lemma}
\label{lemma:theorem-2}
    Given representations $\mathbf{z}^{tr}$ and $\mathbf{z}^{c}$, the equivalence of the spurious term is lower bounded by the following expression:
  \begin{equation}
\begin{split}
     \mathrm{KL}(\mathbb{P}(y^{tr},\mathbf{z}^{tr}|y^{tr}_{s,\mathbf{w}})|| \mathbb{P}(y^{c},\mathbf{z}^{c}|y^{c}_{s,\mathbf{w}})) \geq \mathrm{KL}(\mathbb{P}(\mathbf{z}^{tr}|y^{tr}_{s,\mathbf{w}})|| \mathbb{P}(\mathbf{z}^{c}|y^{c}_{s,\mathbf{w}}))
\end{split}
\end{equation}
\end{lemma}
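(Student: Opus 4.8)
The plan is to prove the inequality by applying the chain rule for the KL divergence and then discarding a nonnegative remainder term. As in the proof of \cref{lemma:theorem-1}, I would first pass to the cleaner notation $\mathbb{P}_{tr}(\cdot\,|\hat y_{s})$ and $\mathbb{P}_{c}(\cdot\,|\hat y_{s})$ for the training and comparison conditionals, suppressing the weight subscript $\mathbf{w}$ and writing the single variable $\hat y_{s}$ for both $\hat y^{tr}_{s,\mathbf{w}}$ and $\hat y^{c}_{s,\mathbf{w}}$. The entire argument then takes place conditionally on the predicted spurious label $\hat y_{s}$, which is carried along in every term.

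The key step is a chain-rule factorization of the joint divergence, but using the factorization order \emph{opposite} to the one chosen in \cref{lemma:theorem-1}: there one wrote $(y,\mathbf{z})$ as $y$ followed by $\mathbf{z}\mid y$, whereas here I would write the joint density as $\mathbb{P}(y,\mathbf{z}\,|\hat y_{s}) = \mathbb{P}(\mathbf{z}\,|\hat y_{s})\,\mathbb{P}(y\,|\mathbf{z},\hat y_{s})$ for both distributions, so that the desired marginal $\mathbb{P}(\mathbf{z}\,|\hat y_{s})$ is isolated. Substituting into the definition of the divergence, the logarithm splits additively and a direct manipulation of the sums (mirroring the algebra already carried out in \cref{lemma:theorem-1}) yields
\begin{equation*}
\mathrm{KL}(\mathbb{P}_{tr}(y,\mathbf{z}|\hat y_{s})|| \mathbb{P}_c(y,\mathbf{z}|\hat y_{s})) = \mathrm{KL}(\mathbb{P}_{tr}(\mathbf{z}|\hat y_{s})|| \mathbb{P}_c(\mathbf{z}|\hat y_{s})) + \mathbb{E}_{\mathbf{z}}\big[\mathrm{KL}(\mathbb{P}_{tr}(y|\mathbf{z},\hat y_{s})|| \mathbb{P}_c(y|\mathbf{z},\hat y_{s}))\big],
\end{equation*}
where the expectation is taken over $\mathbf{z}\sim\mathbb{P}_{tr}(\cdot\,|\hat y_{s})$.

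To finish, I would observe that the remainder term is a conditional KL divergence averaged against a probability distribution, hence nonnegative by Gibbs' inequality; dropping it gives the claimed lower bound. Optionally I can simplify the remainder using the fork structure of \cref{pro:fork causal}: since $\hat y_{s}=f_{\mathrm{GIC}}(\mathbf{z};\mathbf{w})$ is a deterministic function of $\mathbf{z}$ and $y\perp \hat y_{s}\mid\mathbf{z}$, the inner conditionals collapse to $\mathbb{P}(y|\mathbf{z})$, which clarifies the interpretation, though the inequality itself requires only nonnegativity of the KL term. I do not expect a genuine obstacle in this lemma; the only point requiring care is bookkeeping, namely preserving the conditioning on $\hat y_{s}$ consistently through the chain rule and selecting the factorization order that exposes the $\mathbf{z}$-marginal divergence rather than the $y$-marginal one.
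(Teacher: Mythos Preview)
Your proposal is correct and follows essentially the same path as the paper: decompose the joint conditional KL divergence into the $\mathbf{z}$-marginal KL plus a nonnegative remainder, then drop the remainder. The only difference is one of packaging. The paper first rewrites $\mathbb{P}(y,\mathbf{z},\hat y_s)=\mathbb{P}(y|\mathbf{z})\mathbb{P}(\hat y_s,\mathbf{z})$ using the fork property $y\perp\hat y_s\mid\mathbf{z}$ from \cref{pro:fork causal} before splitting the logarithm, so its remainder comes out as $\mathrm{KL}(\mathbb{P}_{tr}(y|\mathbf{z})||\mathbb{P}_c(y|\mathbf{z}))$ directly; you instead apply the chain rule for KL in full generality and obtain the remainder $\mathbb{E}_{\mathbf{z}}\big[\mathrm{KL}(\mathbb{P}_{tr}(y|\mathbf{z},\hat y_s)||\mathbb{P}_c(y|\mathbf{z},\hat y_s))\big]$, invoking the fork structure only as an optional simplification afterward. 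Your route has the minor advantage that the inequality itself holds without any causal assumption, while the paper's route makes the structure of the remainder transparent from the outset; once the fork property is applied the two remainders coincide.
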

\begin{proof}
We express our variables in the same manner as in \cref{lemma:theorem-1} and omit the subscript $\mathbf{w}$.
\begin{equation}
\label{app:equa-theorem-2}
\begin{split}
     &\mathrm{KL}(\mathbb{P}_{tr}(y,\mathbf{z}|\hat y_{s})|| \mathbb{P}_{c}(y,\mathbf{z}|\hat y_{s}))\\
     &=\mathbb{E}_{(y,\mathbf{z},\hat y_{s})}\Big[ \log \frac{\mathbb{P}_{tr}(y,\mathbf{z}|\hat y_{s})}{\mathbb{P}_c(y,\mathbf{z}|\hat y_{s})}\Big] \\
     &=\mathbb{E}_{(y,\mathbf{z},\hat y_{s})}\Big[ \log \frac{\mathbb{P}{tr}(y,\mathbf{z},\hat y_{s})}{\mathbb{P}_{c}(y,\mathbf{z},y_{s})} \Big] - \mathbb{E}_{\hat y_{s}} \Big[ \log \frac{\mathbb{P}_{tr}(\hat y_{s})}{\mathbb{P}_c(\hat y_{s})} \Big ].
\end{split}
\end{equation}

By the Property \ref{pro:fork causal}, we derive:
\begin{equation}
\begin{split}
    \mathbb{P}(y,\mathbf{z},\hat y_{s})=\mathbb{P}(y|\mathbf{z})\mathbb{P}(\hat y_{s}|\mathbf{z})\mathbb{P}(\mathbf{z})=\mathbb{P}(y|\mathbf{z})\mathbb{P}(\hat y_{s},\mathbf{z}).
\end{split}
\end{equation}
Hence, Equation \eqref{app:equa-theorem-2} can be simplified as follows
\begin{equation}
\label{app:equa-theorem-3}
\begin{split}
     &\mathrm{KL}(\mathbb{P}_{tr}(y,\mathbf{z}|\hat y_{s})|| \mathbb{P}_{c}(y,\mathbf{z}|\hat y_{s}))\\
     &= \mathbb{E}_{(y,\mathbf{z},\hat y_{s})}\Big[ \log \frac{\mathbb{P}_{tr}(y|\mathbf{z})\mathbb{P}_{tr}(\mathbf{z},\hat y_{s})}{\mathbb{P}_{c}(y|\mathbf{z})\mathbb{P}_{c}(\mathbf{z},\hat y_{s})}\Big]-\mathrm{KL}(\mathbb{P}_{tr}(\hat y_{s})||\mathbb{P}_c(\hat y_{s}))\\
     &=\mathbb{E}_{(y,\mathbf{z},\hat y_{s})}\Big[ \log \frac{\mathbb{P}_{tr}(y|\mathbf{z})}{\mathbb{P}_c(y|\mathbf{z})}\Big]+\mathbb{E}_{(y,\mathbf{z},\hat y_{s})}\Big[ \log \frac{\mathbb{P}_{tr}(\mathbf{z},\hat y_{s})}{\mathbb{P}_c(\mathbf{z},\hat y_{s})}\Big]-\mathrm{KL}(\mathbb{P}_{tr}(y_{s})||\mathbb{P}_c(\hat y_{s}))\\
     &=\mathrm{KL}(\mathbb{P}_{tr}(y|\mathbf{z})||\mathbb{P}_c(y|\mathbf{z}))+\mathrm{KL}(\mathbb{P}_{tr}(\mathbf{z}|\hat y_{s})||\mathbb{P}_c(\mathbf{z}|\hat y_{s}))\\
     &\geq \mathrm{KL}(\mathbb{P}_{tr}\Big(\mathbf{z}|\hat y_{s})||\mathbb{P}_c(\mathbf{z}|\hat y_{s})\Big).
\end{split}
\end{equation}
By Equation \ref{app:equa-theorem-3}, we prove the lower bound of $\mathrm{KL}(\mathbb{P}_{tr}(y,\mathbf{z}|y_{s})|| \mathbb{P}_c(y,\mathbf{z}|y_{s}))$ is $\mathrm{KL}(\mathbb{P}_{tr}(\mathbf{z}|y_{s})||\mathbb{P}_c(\mathbf{z}|y_{s,\mathbf{w}}))$. The proof is complete.
\end{proof}


\begin{theorem}
    \textbf{(Restatement of Theorem \ref{Lower Bound of Spurious Term without Accessible Y})}
    Given representations $\mathbf{z}^{tr}$ and $\mathbf{z}^{c}$, the spurious term is lower bounded by the following expression as:
     \begin{equation}
    \mathrm{KL}(\mathbb{P}(y^{tr}|\hat y^{tr}_{s,\mathbf{w}})|| \mathbb{P}(y^{c}|\hat y^{c}_{s,\mathbf{w}})) \geq \mathrm{KL}(\mathbb{P}(\mathbf{z}^{tr}|\hat y^{tr}_{s,\mathbf{w}})|| \mathbb{P}(\mathbf{z}^{c}|\hat y^{c}_{s,\mathbf{w}}))
\end{equation}
\end{theorem}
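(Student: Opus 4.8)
The plan is to obtain the bound by \emph{chaining} the two auxiliary results already established, namely \cref{lemma:theorem-1} and \cref{lemma:theorem-2}, so that the theorem follows without any fresh computation. Writing the spurious term in the formal notation of the appendix, the target is to show $\mathrm{KL}(\mathbb{P}_{tr}(y|\hat y_{s}) \| \mathbb{P}_{c}(y|\hat y_{s})) \geq \mathrm{KL}(\mathbb{P}_{tr}(\mathbf{z}|\hat y_{s}) \| \mathbb{P}_{c}(\mathbf{z}|\hat y_{s}))$, and the strategy is to insert the joint conditional divergence $\mathrm{KL}(\mathbb{P}_{tr}(y,\mathbf{z}|\hat y_{s}) \| \mathbb{P}_{c}(y,\mathbf{z}|\hat y_{s}))$ as an intermediary: the first lemma identifies it with the left-hand side, the second lemma bounds it below by the right-hand side.

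First I would invoke \cref{lemma:theorem-1}, which gives the \emph{equality} $\mathrm{KL}(\mathbb{P}_{tr}(y|\hat y_{s}) \| \mathbb{P}_{c}(y|\hat y_{s})) = \mathrm{KL}(\mathbb{P}_{tr}(y,\mathbf{z}|\hat y_{s}) \| \mathbb{P}_{c}(y,\mathbf{z}|\hat y_{s}))$. Its proof proceeds by the chain rule for KL divergence, splitting the joint conditional divergence into the label term plus $\mathrm{KL}(\mathbb{P}_{tr}(\mathbf{z}|y,\hat y_{s}) \| \mathbb{P}_{c}(\mathbf{z}|y,\hat y_{s}))$, and then using the modeling assumption that, conditioned on a fixed group $g=(y,\hat y_{s})$, the feature law is shared across $\mathcal{D}$ and $\mathcal{C}$, so the latter divergence vanishes. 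Next I would apply \cref{lemma:theorem-2}, which uses the fork factorization $\mathbb{P}(y,\mathbf{z},\hat y_{s})=\mathbb{P}(y|\mathbf{z})\mathbb{P}(\mathbf{z},\hat y_{s})$ from \cref{pro:fork causal} to decompose the joint conditional divergence into $\mathrm{KL}(\mathbb{P}_{tr}(y|\mathbf{z})\|\mathbb{P}_{c}(y|\mathbf{z})) + \mathrm{KL}(\mathbb{P}_{tr}(\mathbf{z}|\hat y_{s})\|\mathbb{P}_{c}(\mathbf{z}|\hat y_{s}))$; discarding the first, nonnegative summand yields the desired lower bound. Substituting $\mathbf{z}^{tr},\mathbf{z}^{c}$ back into the original variable names then recovers the theorem exactly.

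The core of the argument is therefore the two lemmas, and the genuine obstacle is not the final chaining (which is one line) but justifying the structural assumptions on which the lemmas rest. The equality in \cref{lemma:theorem-1} is only valid if within-group feature distributions truly coincide between the training and comparison sets, i.e. $\mathbb{P}_{tr}(\mathbf{z}|y,\hat y_{s})=\mathbb{P}_{c}(\mathbf{z}|y,\hat y_{s})$; this is the substantive hypothesis, since the two datasets are assumed to differ only in \emph{group proportions}, not in the conditional appearance of features within a group. The inequality in \cref{lemma:theorem-2} relies on the conditional independence $y \perp \hat y_{s}\mid \mathbf{z}$ encoded in the fork structure of \cref{pro:fork causal}, which is precisely what permits the clean factorization that separates off the representation term. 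I expect verifying these two properties, and arguing that dropping the nonnegative $\mathrm{KL}(\mathbb{P}_{tr}(y|\mathbf{z})\|\mathbb{P}_{c}(y|\mathbf{z}))$ term is harmless for a worst-case guarantee, to be the only nontrivial part; everything else is routine KL algebra.
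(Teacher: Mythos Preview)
Your proposal is correct and matches the paper's own proof essentially line for line: the paper's argument for the restated theorem is precisely ``Based on \cref{lemma:theorem-1} and \cref{lemma:theorem-2}, we can directly deduce'' the inequality, i.e., exactly the chaining you describe. Your identification of the structural assumptions underpinning each lemma (shared within-group feature law for \cref{lemma:theorem-1}, the fork factorization from \cref{pro:fork causal} for \cref{lemma:theorem-2}) is also accurate and aligns with how the paper justifies those lemmas.
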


\begin{proof}
    Based on \cref{lemma:theorem-1} and \cref{lemma:theorem-2}, we can directly deduce:
    \begin{equation}
    \mathrm{KL}(\mathbb{P}(y^{tr}|\hat y^{tr}_{s,\mathbf{w}})|| \mathbb{P}(y^{c}|\hat y^{c}_{s,\mathbf{w}})) \geq \mathrm{KL}(\mathbb{P}(\mathbf{z}^{tr}|\hat y^{tr}_{s,\mathbf{w}})|| \mathbb{P}(\mathbf{z}^{c}|\hat y^{c}_{s,\mathbf{w}})).
\end{equation}
\end{proof}

\section{Experiments}
\label{app:Experiments}
\subsection{Synthetic Toy Data}
\label{app:Synthetic Toy Data}

Synthetic data consists of three sets: training ($\mathcal{D}^{tr}$), validation ($\mathcal{D}^{c}$), and test ($\mathcal{D}^{ts}$). These sets are created by blending four two-dimensional Gaussian distributions (representing four groups) with distinct means, equal variances, and zero correlation coefficients.  Let's consider a 2D synthetic dataset with the following distribution:

\begin{example}
\textbf{(Synthetic 2D data)} Let $\mathbf{x} = (\mathbf{x}_1,\mathbf{x}_2) \in \mathbb{R}^2$ represent 2-dimensional features, with the spurious feature $\mathbf{x}_1$ and the invariant feature $\mathbf{x}_2$, and $y \in \mathbb{R}^1$ denoting labels. The synthetic data comprises four groups, namely $G_1$, $G_2$, $G_3$, and $G_4$. The distributions and sample sizes in the training, validation, and test sets for each group are as follows:
\begin{equation}
\begin{cases}
    G_1: (\mathbf{x}_1, \mathbf{x}_2) \sim \mathcal{N}\left(\begin{bmatrix}4 \\ 5\end{bmatrix}, \begin{bmatrix}1 & 0 \\ 0 & 1\end{bmatrix}\right) ; y=0 ; (N^{tr},N^{val},N^{ts})=(3900, 854,3000)\\
     G_2: (\mathbf{x}_1, \mathbf{x}_2) \sim \mathcal{N}\left(\begin{bmatrix}4 \\ 8\end{bmatrix}, \begin{bmatrix}1 & 0 \\ 0 & 1\end{bmatrix}\right); y=1 ; (N^{tr},N^{val},N^{ts})=(100,287,3000) \\  
     G_3: (\mathbf{x}_1, \mathbf{x}_2) \sim \mathcal{N}\left(\begin{bmatrix}8 \\ 8\end{bmatrix}, \begin{bmatrix}1 & 0 \\ 0 & 1\end{bmatrix}\right);y=1 ; (N^{tr},N^{val},N^{ts})=(3900, 18,3000)  \\
    G_4: (\mathbf{x}_1, \mathbf{x}_2) \sim \mathcal{N}\left(\begin{bmatrix}8 \\ 5\end{bmatrix}, \begin{bmatrix}1 & 0 \\ 0 & 1\end{bmatrix}\right);y=0 ; (N^{tr},N^{c},N^{ts})=(100,828,3000)  \\
\end{cases}
\end{equation} 
  \end{example}
  
The varying sample sizes in groups $G_1$, $G_2$, $G_3$, and $G_4$ indicate different group distributions across the training, validation, and test sets. 



\subsection{Real Data}
\label{app:high-dimensional data}

\subsubsection{Dataset Details}
\label{app:Dataset Details}
\textbf{CMNIST}\citep{arjovsky2019invariant} is a noisy digit recognition task. The binary feature (green and red), referred to as color, serves as a spurious feature, while the binary feature (digit contours) acts as the invariant feature. The CMNIST dataset involves two classes, where class 0 corresponds to the original digits (0,1,2,3,4), and class 1 represents digits (5,6,7,8,9). Following the approach recommended in \cite{yao2022improving}, we construct a training set with a sample size of 30,000. In class 0, the ratio of red to green samples is set at 8:2, while in class 1, it is set at 2:8. For the validation set consisting of 10,000 samples, the proportion of green to red samples is equal at 1:1 for all classes. The test set, containing 20,000 samples, features a proportion of green to red samples at 1:9 in class 0 and 9:1 in class 1. Additionally, label flipping is applied with a probability of 0.25. In our experiments, we utilized the validation set as the comparison data and employed an unlabeled validation set to simulate scenarios where comparison data is unavailable. The four groups of CMNIST is $(g_1,g_2,g_3,g_4)= (\{0,green\},\{0,red\},\{1,green\},\{1,red\})$ and the group distribution of the training data is $(g_1,g_2,g_3,g_4)=(0.1,0.4,0.4,0.1)$, while the group distribution of the comparison data is $(g_1,g_2,g_3,g_4)=(0.26,0.25,0.25,0.24)$.

\textbf{Waterbirds} aims to classify bird images as either waterbirds or landbirds, with each bird image falsely associated with either a water or land background. Waterbirds is a synthetic dataset where each image is generated by combining bird images sampled from the CUB dataset \citep{wah2011caltech} with backgrounds selected from the Places dataset \citep{zhou2017places}. We directly load the Waterbirds dataset using the Wilds library in PyTorch \citep{koh2021wilds}. The dataset consists of a total of 4,795 training samples, with only 56 samples labeled as waterbirds on land and 184 samples labeled as landbirds on water. The remaining training data includes 3,498 samples from landbirds on land and 1,057 samples from waterbirds on water. We still directly use the validation set as comparison data. The waterbirds dataset can be divided into four groups, namely $(g_1,g_2,g_3,g_4)= (\{landbird,land\},\{landbird,water\},\{waterbird,land\},\{waterbird,water\})$. The class distribution of the training data is $(g_1,g_2,g_3,g_4)=(0.73,0.04,0.10,0.13)$, while the class distribution of the comparison data is $(g_1,g_2,g_3,g_4)=(0.56,0.22,0.16,0.16)$.

\textbf{CelebA} \citep{liu2015deep,sagawa2019distributionally} is a hair-color prediction task, similar to the study conducted by \citep{yao2022improving}, and follows the data preprocessing procedure outlined in \citep{sagawa2019distributionally}. Given facial images of celebrities as input, the task is to identify their hair color as either blond or non-blond. This labeling is spuriously correlated with gender, which can be either male or female. In the training set, there are 71,629 instances (44\%) of females with non-blond hair, 66,874 instances (41\%) of non-blond males, 22,880 instances (14\%) of blond females, and 1,387 instances (1\%) of blond males.  In the validation set, there are 8535 instances (43\%) of females with non-blond hair, 8276 instances (42\%) of non-blond males, 2874 instances (14\%) of blond females, and 182 instances (1\%) of blond males. The validation set is still regarded as the comparison data to infer group labels. We found that the group distribution of the training data and the comparison data in the CelebA dataset are very similar, which poses a challenge for GIC inference.
 
\textbf{CivilComments-WILDS}\citep{koh2021wilds} used in our experiments is derived from the Jigsaw dataset \cite{borkan2019nuanced}. The task of CivilComments is to determine whether a given online comment is toxic. The attribute of interest, denoted as $a$, is an 8-dimensional binary vector, where each entry is a binary indicator representing whether the online comment mentions one of the following 8 demographic identities: $a \in$\{male, female, LGBTQ, Christian, Muslim, other religion, Black, White\}. Comments that mention these identits tend to be more offensive and toxic. Similar to JTT \cite{liu2021just}, during the training of GIC, we binarize the spurious attribute labels, specifically using the dataset with ``identity any" as the only identity attribute \citep{koh2021wilds}. And during the evaluation phase for worst-group accuracy, we evaluate over these 16 groups $\{(y,a_j)\}^8_{j=1}$ to calculate the wrost accuracy, where the true label $y$ represents toxic or non-toxic comments. After binarizing the spurious attributes of the original data, there are 4 groups $(g_1,g_2,g_3,g_4)=\{$non-toxic no identities, non-toxic has identities, toxic no identities, toxic has identities$\}$. However, the CivilComments dataset actually contains 16 subgroups , which can more finely characterize the group distribution differences between the training data and comparison data. Among these, the ratios of majority to minority groups in the training and validation datasets are $16.9:1$ and $31.6:1$, respectively. This can more directly reflect the distribution differences between the training and validation data compared to the binarization groups.

\subsubsection{Baseline Details}
\label{Downstream invariant learning methods}
In this section, we main focus on baselines in this paper, categorizing them into two types based on the need for group labels: mitigating spurious correlations with group labels and mitigating spurious correlations without group labels.

\textbf{Spurious correlations mitigation with group labels.} GroupDRO \cite{sagawa2019distributionally} is a well-established method that enhances worst-group accuracy using group labels. Unlike standard ERM, which minimizes the average loss across all groups, GroupDRO partitions the data based on prior knowledge of groups and minimizes the worst-case loss over groups in the training data, thereby improving worst-group accuracy. DFR \cite{kirichenko2022last} balances the training set by leveraging the Subsample strategy. This involves retaining all data from the smallest group and Subsample data from the other groups to equalize group sizes, followed by retraining the ERM model using this balanced dataset. LISA \cite{yao2022improving} addresses spurious correlation between the spurious attribute and true label through Mixup. It employs two mixup strategies: Intra-label, which interpolates samples with the same label but different spurious attributes, and intra-domain, which interpolates samples with the same spurious attributes but different true labels. Through these mixup techniques, LISA achieves a significant improvement in the worst-group accuracy. It is important to emphasize that LISA utilizes different Mixup techniques for various experimental datasets. Specifically, for CMNIST and Waterbirds, the classic mixup (i.e., linear interpolation between two images) is considered. In the case of CelebA, CutMix, which involves replacing removed regions with a patch from another image, is employed. Meanwhile, for CivilComments, LISA resorts to using Manifold Mix. For all four datasets in the experiments, GIC adopts the same Mixup strategies as LISA.

\textbf{Spurious correlations mitigation without group labels.} JTT \cite{liu2021just} initially treats misclassified points by a trained ERM as errors, which are then upsampled for improving model robustness. Similarly, CnC \cite{zhang2022correct} uses the outputs of an ERM model to identify samples with the same class but different spurious features, and trains a robust model with contrastive learning. EIIL \cite{creager2021environment} learns spurious attribute (group) labels using the EI method, which maximizes the loss of GroupDRO, and utilizes GroupDRO to learn invariant features. However, its performance is unstable and relatively poor in scenarios where learning spurious correlations is challenging \cite{lin2022zin}. SSA \cite{nam2022spread} leverages semi-supervised learning with available group labels for some samples to train a spurious attribute predictor. ZIN \cite{lin2022zin} improves group inference with auxiliary information such as timestamps for time-series data or meta-annotations for images, while DISC \cite{wu2023discover} constructs a concept bank with potential spurious attributes. However, acquiring such auxiliary information (human priors or partial group labels) may also pose difficulties, limiting the general applicability of these methods.

\textbf{Downstream invariant learning methods.} We considere four downstream invariant learning algorithms. Mixup \cite{yao2022improving} selectively mixes samples with the same label but different spurious features or samples with different labels but the same spurious feature, to achieve invariance. Subsample \cite{kirichenko2022last} constructs balanced data by keeping the smallest sample size group and subsampling other groups to the same quantity. Upsample \cite{liu2021just} oversamples samples in the misclassified sets using ERM, emphasizing difficult-to-predict groups, as well as the classic GrouDRO \cite{sagawa2019distributionally} algorithm.

\subsubsection{Training Details}
\label{app:Training Details}
This section describes the experimental details, including hyperparameters and model architectures. We follow the stages mentioned in Algorithm \ref{alg:GIC} to describe the training details.

\textbf{Stage 1: Extracting feature representations.} First, we introduce the feature extractors $\Phi(\cdot)$ used for each dataset. We follow previous works \cite{kirichenko2022last, liu2021just, zhang2022correct, yao2022improving} and consider architectures including LeNet-5 CNN (CMNIST), ResNet-50 (Waterbirds and CelebA), and BERT (CivilComments) as our feature extractors.  We detail the feature extractors architecture and hyperparameters for each dataset below:
\begin{enumerate}
    \item CMNIST: We use the LeNet-5 CNN architecture in the pytorch image classification tutorial. We train with SGD, epochs $E = 5$, learning rate $1e-3$, batch size 32, default weight decay $1e-4$, and momentum 0.9.
    \item Waterbirds: We use the \texttt{torchvision} implementation of ResNet-50 with pretrained weights from ImageNet. Also as previous works , we train with SGD, default epochs $E = 100$, learning rate $1e-3$, weight decay $1e-3$, batch size 32, and momentum 0.9.
    \item CelebA: We directly use the \texttt{torchvision} implementation of ResNet-50 with pretrained weights from ImageNet without any fine-tuning as the feature extractor.
    \item CivilComments:  We use the HuggingFace (\texttt{pytorch-transformers}) implementation of BERT with pre-trained weights and number of tokens capped at 220 without any fine-tuning. 
\end{enumerate}

The CelebA and CivilComments datasets directly use pretrained models as feature extractors without any fine-tuning, and CMNSIT trains the extractor from scratch. It is worth noting that the Waterbirds dataset utilizes pretrained weights from ImageNet and then fine-tunes them on the Waterbirds dataset. We adopt this strategy because previous work \cite{kirichenko2022last} has observed that initializing the feature extractor with ImageNet trained weights and fine-tuning on the dataset can significantly improve the performance of feature extraction on the Waterbirds dataset. However, such improvements are not significant on datasets with larger sample sizes, such as CelebA.

\textbf{Stage 2:  Inferring group labels.} 
In this section, we describe the model architectures and training hyperparameters used for training the spurious attribute classifier $f_{\mathrm{GIC}}$. We also explain how we selected crucial hyperparameters, such as the weighting parameter $\gamma$ and the training epochs $K$.

As the input to $f_\mathrm{GIC}$ are simple linear embeddings (similar to the input of the last layer of a neural network) after feature extraction, we only use a simple single-layer neural network as the model structure of $f_\mathrm{GIC}$ on all datasets. Specifically, we mapped the input to a 2-dimensional space and then used the Sigmoid function to convert it into a probability as the prediction of spurious attributes. The hyperparameters used for each dataset in Stage 2 are as follows:

\begin{enumerate}
    \item CMNIST: We use training epochs $K=20$, a weight of $\gamma = 10$, a learning rate of $lr=1e-5$, and a momentum$ = 0.9$ .
    \item Waterbirds: We use training epochs $K=10$, a weight of $\gamma = 10$, a learning rate of $lr=1e-4$, and a momentum$ = 0.9$.
    \item CelebA: We use training epochs $K=15$, a weight of $\gamma = 10$, a learning rate of $lr=1e-5$, and a momentum$ = 0.9$.
     \item CivilComments: We use training epochs $K=5$, a weight of $\gamma = 5$, a learning rate of $lr=1e-5$, and a momentum$ = 0.9$.
\end{enumerate}

In the above parameters, the most important ones are the weight parameter $\gamma$ and the number of training epochs $K$. The weight parameter adjusts the balance of the correlated and spurious terms, and a large $\gamma$t can cause the GIC to overly focus on the spurious term, resulting in an increase in the correlated CE loss. On the other hand, a small $\gamma$ may not facilitate the learning of spurious features. Additionally, when fixing the weight parameter, we observe the spurious loss of may decrease and the correlated term loss increases with too large training epochs $K$. Therefore, it is crucial to find a reasonable choices both for the weight parameter $\gamma$ and training epochs $K$. We propose a grid search method for selecting $\gamma$ and $K$ by observing the trends of KL Loss and CE Loss. We start with $\gamma = 10$ and $K=20$, and compute the corresponding KL loss (spurious term loss) and CE loss (relevant term loss). Our goal is to maximize the KL loss and minimize the CE loss. If we observe a decrease in the KL loss and an increase in the CE loss, we first decrease $K \in \{20,15,14,\cdots,2,1\}$. Once $K$ cannot be further reduced, we then decrease $\gamma \in \{10,5,4,3,2,1\}$ and adjust $K$ accordingly. We repeat this process until we no longer observe a decrease in the KL loss and an increase in the CE loss, and the corresponding values of $\gamma$ and $K$ at this point are the selected parameters. Taking the Waterbirds dataset as an example, \cref{fig:loss} visualizes the changes in KL loss and CE loss when comparing data without labels. We observe that when $\gamma = 10$ and $K = 20$, the CE loss does not decrease significantly and even starts to increase around epoch $= 10$. This prompts us to reduce the number of training iterations $K$ to 10.

\begin{figure}[H]
  \centering
  \includegraphics[width=.5\textwidth, height=0.13\textheight]{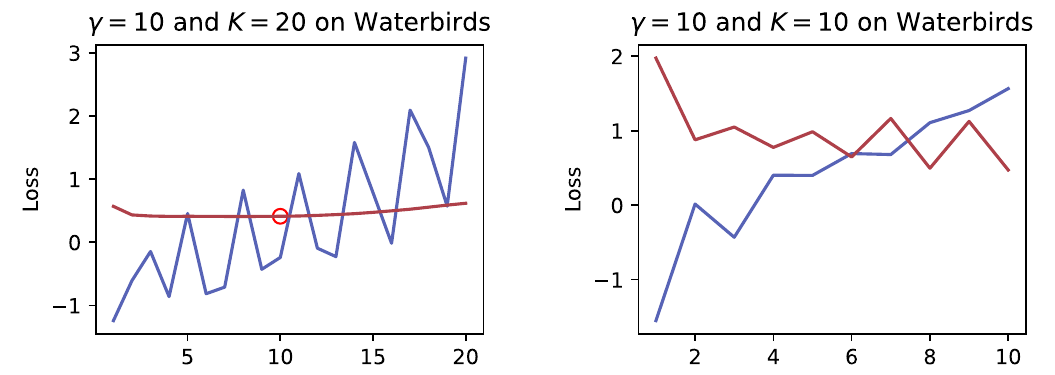}
  \caption{The selection of important parameters $\gamma$ and $K$ in DIG on Waterbirds. The red circle indicates the inflection point where the CE Loss starts to increase. We observe that excessively large values of $K$ and $\gamma$ would cause the CE Loss to increase. Therefore, by observing the changing trends of the CE Loss and KL Loss, we determined the optimal parameters using grid search.}
  \label{fig:loss}
\end{figure}

\textbf{Stage 3: Invariant learning.} After obtaining the inferred group labels using GIC, the final stage is the invariant learning stage. In the third stage, we train the robust model using the inferred group labels from the second stage for invariant learning. In our experiments, we consider four invariant learning algorithms: Mixup, GroupDRO, Subsample, and Upsample. During this stage, we use the parameters and model architecture corresponding to each specific invariant learning algorithm. For Mixup, we directly use the parameters from the original LISA paper \cite{yao2022improving}. We only need to replace the spurious class labels of the original training set with the labels predicted by GIC. For GroupDRO and Subsample, we follow the parameters implemented in EIIL \cite{creager2021environment}, CnC \cite{zhang2022correct}, and DFR \cite{kirichenko2022last}. As for Upsample, we refer to the parameters selected by JTT \cite{liu2021just}.

In Phase 3, in line with baselines such as JTT, CnC, EIIL, and SSA, which tune all hyperparameters and also apply early stopping based on the highest worst-group accuracy on the validation set, we similarly use validation data with group labels for model selection. Specifically, GIC applies early stopping based on the highest worst-group accuracy on the validation set, thus establishing the final count of training epochs.

\section{More Results}
\label{more results}

In this section, we provide additional experimental results as a supplement to Section \ref{Experiments}. Firstly, we fully display the average-group accuracy and the worst-group accuracy as outlined in \cref{The scalability of GIC}. Subsequently, we showcase the results of ablation experiments, illustrating the performance enhancement of GIC with the adoption of early stopping. Then, we present further error cases from the Waterbirds and CelebA datasets, underscoring the prevalent manifestation of semantic consistency in GIC.Following that, we discuss the various sources of comparison data and the impact of their sample sizes as mentioned in \cref{The Construction of Comparison Data}, to elucidate the effect of distinct origins and quantities of comparison data on the performance of GIC. Finally, we highlight the performance implications of the distribution discrepancy between the comparison data and training data on GIC, augmenting the insights in \cref{Discussion}.

\subsection{Complete Results of group inference method comparison}
\label{Complete Results of group inference method comparison}

In this section, we present the experimental results that were not fully shown in \cref{The scalability of GIC} due to space limitations. These results include the average-group accuracy, where GIC outperforms baselines not only in the worst-performing group but also achieves higher accuracy in the average group. This further demonstrates that GIC has the ability to improve both the average and worst-group accuracy.

\begin{table}[H]
\setlength{\abovecaptionskip}{-0.01cm}
\centering
\caption{Average and worst-group accuracy comparison (\%). }
\vskip 0.1in
{
\begin{tabular}{@{}lcc|cc|cc|cccccc@{}}
\toprule
\multirow{2}{*}{Method} &   \multicolumn{2}{c}{\makecell{Waterbirds}}& \multicolumn{2}{c}{\makecell{CelebA}}& \multicolumn{2}{c}{\makecell{Waterbirds}}& \multicolumn{2}{c}{\makecell{CelebA}}\\
& Avg. & Worst & Avg. & Worst & Avg. & Worst & Avg. & Worst \\
\midrule
\multicolumn{5}{c}{\cellcolor{gray!25}\textit{+GroupDRO}} & \multicolumn{5}{c}{\cellcolor{gray!25}\textit{+Subsample}} \\
\midrule
ERM  &94.6$\pm \scriptstyle{0.0}$&75.6$\pm \scriptstyle{0.4}$&85.9$\pm \scriptstyle{0.1}$&77.2$\pm \scriptstyle{0.1}$&91.3$\pm \scriptstyle{1.8}$&79.4$\pm \scriptstyle{0.3}$ &88.9$\pm \scriptstyle{0.6}$&78.5$\pm \scriptstyle{0.1}$&\\
EI&96.5$\pm \scriptstyle{0.2}$ & 77.2$\pm \scriptstyle{1.0}$&85.7$\pm \scriptstyle{0.1}$& 81.7$\pm \scriptstyle{0.8}$&88.1$\pm \scriptstyle{0.3}$ &81.9$\pm \scriptstyle{1.4}$ &90.9$\pm \scriptstyle{0.1}$&82.8$\pm \scriptstyle{0.5}$\\
$\mathrm{GIC}_{\mathcal{C}_y}$&97.2$\pm \scriptstyle{0.6}$& 80.2$\pm \scriptstyle{0.1}$&92.7$\pm \scriptstyle{1.0}$ & 82.1$\pm \scriptstyle{0.3}$&89.2$\pm \scriptstyle{0.9}$& 83.5$\pm \scriptstyle{0.8}$&91.1$\pm \scriptstyle{0.1}$ & 86.1$\pm \scriptstyle{2.2}$\\
$\mathrm{GIC}_{\mathcal{C}}$&97.6$\pm \scriptstyle{0.1}$& 79.2$\pm \scriptstyle{0.4}$&93.8$\pm \scriptstyle{0.2}$ & 79.7$\pm \scriptstyle{0.6}$ &89.8$\pm \scriptstyle{1.3}$ & 82.1$\pm \scriptstyle{1.1}$ &90.3$\pm \scriptstyle{0.3}$& 83.1$\pm \scriptstyle{0.3}$\\
\midrule
\multicolumn{5}{c}{\cellcolor{gray!25}\textit{+Upsample}}      & \multicolumn{5}{c}{\cellcolor{gray!25}\textit{+Mixup}} \\
\midrule
ERM&89.3$\pm \scriptstyle{0.7}$ & 83.8$\pm \scriptstyle{1.2}$&88.1$\pm \scriptstyle{0.3}$& 81.5$\pm \scriptstyle{1.7}$& 94.0$\pm \scriptstyle{0.4}$ & 82.1$\pm \scriptstyle{0.8}$ &90.5$\pm \scriptstyle{0.3}$&80.6$\pm \scriptstyle{1.7}$\\
EI &88.8$\pm \scriptstyle{0.3}$&81.3$\pm \scriptstyle{0.7}$&95.4$\pm \scriptstyle{0.2}$&84.8$\pm \scriptstyle{0.2}$&90.1$\pm \scriptstyle{0.3}$ & 85.7$\pm \scriptstyle{0.4}$ &90.7$\pm \scriptstyle{0.6}$&84.9$\pm \scriptstyle{3.7}$\\
$\mathrm{GIC}_{\mathcal{C}_y}$ &91.4$\pm \scriptstyle{0.3}$& 84.1$\pm \scriptstyle{0.0}$&88.5$\pm \scriptstyle{0.7}$ & 87.2$\pm \scriptstyle{0.0}$& 89.6$\pm \scriptstyle{1.3}$ & 86.3$\pm \scriptstyle{0.1}$ &91.9$\pm \scriptstyle{0.1}$&89.4$\pm \scriptstyle{0.2}$\\
$\mathrm{GIC}_{\mathcal{C}}$ &90.8$\pm \scriptstyle{0.2}$& 82.1$\pm \scriptstyle{0.7}$&89.7$\pm \scriptstyle{0.0}$&87.8$\pm \scriptstyle{1.1}$&89.3$\pm \scriptstyle{0.8}$&  85.4$\pm \scriptstyle{0.1}$&92.1$\pm \scriptstyle{0.1}$&89.5$\pm \scriptstyle{0.0}$\\
\bottomrule
\end{tabular}}

\label{tab:acc-2}
\end{table}

\subsection{Ablation Study}
\label{Ablation Study}

In this section, We present ablation experiments results. For each invariant learning algorithms, including Mixup, GroupDRO, Upsample, and Subsample, we followe previous research \cite{liu2021just,yao2022improving,zhang2022correct,creager2021environment},and us the group labels of the validation set and consider the worst-group accuracy to determine early stopping. Tables \ref{tab:acc-abla-1} and \ref{tab:acc-abla-2} show the average and worst-performing group performances obtained by combining GIC with different invariant learning algorithms, both with and without early stopping. We observe that the reference for early stopping does indeed improve performance, especially for the worst-group.

\begin{table}[H]
\setlength{\abovecaptionskip}{-0.01cm}
\centering
\caption{Ablation experimental results. The results of combining GIC with Mixup, using early stopping based on validation set labels. Table \ref{tab:acc-abla-1} shows the performance improvement brought by early stopping.}
\vskip 0.1in
{
\begin{tabular}{@{}lccc|cc|cc|cccc@{}}
\toprule
\multirow{2}{*}{Method} & \multirow{2}{*}{\makecell{ES}} & \multicolumn{2}{c}{CMNIST} &  \multicolumn{2}{c}{\makecell{Waterbirds}}& \multicolumn{2}{c}{\makecell{CelebA}} & \multicolumn{2}{c}{\makecell{CivilComments}}\\
&  & Avg.& Worst & Avg. & Worst& Avg. & Worst& Avg. & Worst\\
\midrule
$\mathrm{GIC}_{\mathcal{C}_y}$-M& $\times$ & 70.3$\pm \scriptstyle{0.4}$ & 65.1$\pm \scriptstyle{1.1}$ &92.6$\pm \scriptstyle{0.6}$&80.1$\pm \scriptstyle{0.1}$&91.3$\pm \scriptstyle{0.8}$&86.9$\pm \scriptstyle{1.4}$&90.8$\pm \scriptstyle{0.5}$ & 67.6$\pm \scriptstyle{1.1}$\\
$\mathrm{GIC}_{\mathcal{C}}$-M & $\times$ & 67.5$\pm \scriptstyle{1.5}$ & 63.1$\pm \scriptstyle{0.8} $ &89.8$\pm \scriptstyle{0.5}$&80.5$\pm \scriptstyle{0.0}$&92.1$\pm \scriptstyle{0.0}$&85.6$\pm \scriptstyle{1.2}$&90.9$\pm \scriptstyle{0.2}$&66.6$\pm \scriptstyle{0.3}$\\
$\mathrm{GIC}_{\mathcal{C}_y}$-M & $\checkmark$  &{73.2}$\pm \scriptstyle{0.2}$&72.2$\pm \scriptstyle{0.5}$ &89.6$\pm \scriptstyle{1.3}$ & 86.3$\pm \scriptstyle{0.1}$ &91.9$\pm \scriptstyle{0.1}$&89.4$\pm \scriptstyle{0.2}$&90.0$\pm \scriptstyle{0.2}$&72.5$\pm \scriptstyle{0.3}$\\
$\mathrm{GIC}_{\mathcal{C}}$-M & $\checkmark$  & 73.1$\pm \scriptstyle{0.5}$&71.7$\pm \scriptstyle{0.3}$ &89.3$\pm \scriptstyle{0.8}$&  85.4$\pm \scriptstyle{0.1}$&92.1$\pm \scriptstyle{0.1}$&89.5$\pm \scriptstyle{0.0}$&89.7$\pm \scriptstyle{0.0}$&72.3$\pm \scriptstyle{0.2}$\\

\bottomrule
\end{tabular}}
\label{tab:acc-abla-1}
\end{table}

\begin{table}[H]
\setlength{\abovecaptionskip}{-0.01cm}
\centering
\caption{Ablation experimental results. The results of combining GIC with GroupDRO, Subsample, Upsample and Mixup, using early stopping based on validation set labels. Table \ref{tab:acc-abla-2} displays the early stopping results of the experiments in \cref{tab:acc-2}, demonstrating the effectiveness of early stopping.}
\vskip 0.1in
{
\begin{tabular}{@{}lccc|cc|cc|cccccc@{}}
\toprule
\multirow{2}{*}{Method} &\multirow{2}{*}{\makecell{ES}} & \multicolumn{2}{c}{\makecell{Waterbirds}}& \multicolumn{2}{c}{\makecell{CelebA}}& \multicolumn{2}{c}{\makecell{Waterbirds}}& \multicolumn{2}{c}{\makecell{CelebA}}\\
&  & Avg. & Worst& Avg. & Worst&Avg. & Worst& Avg. & Worst\\
\midrule
\multicolumn{6}{c}{\cellcolor{gray!25}\textit{+GroupDRO}}      & \multicolumn{4}{c}{\cellcolor{gray!25}\textit{+Subsample}} \\
\midrule
$\mathrm{GIC}_{\mathcal{C}_y}$&$\times$&98.0$\pm \scriptstyle{0.0}$&77.1$\pm \scriptstyle{1.8}$&94.3$\pm \scriptstyle{0.1}$&76.7$\pm \scriptstyle{0.6}$&88.9$\pm \scriptstyle{0.4}$&80.0$\pm \scriptstyle{1.4}$&89.9$\pm \scriptstyle{0.2}$ & 84.1$\pm \scriptstyle{2.9}$\\
$\mathrm{GIC}_{\mathcal{C}}$&$\times$&97.9$\pm \scriptstyle{0.0}$&70.1$\pm \scriptstyle{1.2}$&93.9$\pm \scriptstyle{0.0}$&71.7$\pm \scriptstyle{0.0}$&90.4$\pm \scriptstyle{0.4}$&79.6$\pm \scriptstyle{1.6}$&92.1$\pm \scriptstyle{0.5}$& 77.6$\pm \scriptstyle{3.3}$\\
$\mathrm{GIC}_{\mathcal{C}_y}$&$\checkmark$&97.2$\pm \scriptstyle{0.6}$& 80.2$\pm \scriptstyle{0.1}$&92.7$\pm \scriptstyle{1.0}$ & 82.1$\pm \scriptstyle{0.3}$&89.2$\pm \scriptstyle{0.9}$& 83.5$\pm \scriptstyle{0.8}$&91.1$\pm \scriptstyle{0.1}$ & 86.1$\pm \scriptstyle{2.2}$\\
$\mathrm{GIC}_{\mathcal{C}}$&$\checkmark$&97.6$\pm \scriptstyle{0.1}$& 79.2$\pm \scriptstyle{0.4}$&93.8$\pm \scriptstyle{0.2}$ & 79.7$\pm \scriptstyle{0.6}$ &89.8$\pm \scriptstyle{1.3}$ & 82.1$\pm \scriptstyle{1.1}$ &90.3$\pm \scriptstyle{0.3}$& 83.1$\pm \scriptstyle{0.3}$\\
\midrule
\multicolumn{6}{c}{\cellcolor{gray!25}\textit{+Upsample}}      & \multicolumn{4}{c}{\cellcolor{gray!25}\textit{+Mixup}} \\
\midrule
$\mathrm{GIC}_{\mathcal{C}_y}$&$\times$&85.8$\pm \scriptstyle{1.0}$&77.7$\pm \scriptstyle{0.1}$&86.9$\pm \scriptstyle{0.9}$&84.2$\pm \scriptstyle{1.4}$&92.6$\pm \scriptstyle{0.6}$&80.1$\pm \scriptstyle{0.1}$&91.3$\pm \scriptstyle{0.8}$&86.9$\pm \scriptstyle{1.4}$\\
$\mathrm{GIC}_{\mathcal{C}_y}$&$\times$ &83.3$\pm \scriptstyle{2.9}$&70.9$\pm \scriptstyle{0.6}$&89.9$\pm \scriptstyle{0.1}$&87.5$\pm \scriptstyle{1.4}$&89.8$\pm \scriptstyle{0.5}$&80.5$\pm \scriptstyle{0.0}$&92.1$\pm \scriptstyle{0.0}$&85.6$\pm \scriptstyle{1.2}$\\
$\mathrm{GIC}_{\mathcal{C}_y}$ &$\checkmark$&91.4$\pm \scriptstyle{0.3}$& 84.1$\pm \scriptstyle{0.0}$&88.5$\pm \scriptstyle{0.7}$ & 87.2$\pm \scriptstyle{0.0}$& 89.6$\pm \scriptstyle{1.3}$ & 86.3$\pm \scriptstyle{0.1}$ &91.9$\pm \scriptstyle{0.1}$&89.4$\pm \scriptstyle{0.2}$\\
$\mathrm{GIC}_{\mathcal{C}}$ &$\checkmark$ &90.8$\pm \scriptstyle{0.2}$& 82.1$\pm \scriptstyle{0.7}$&89.7$\pm \scriptstyle{0.0}$&87.8$\pm \scriptstyle{1.1}$&89.3$\pm \scriptstyle{0.8}$&  85.4$\pm \scriptstyle{0.1}$&92.1$\pm \scriptstyle{0.1}$&89.5$\pm \scriptstyle{0.0}$\\
\bottomrule
\end{tabular}}
\label{tab:acc-abla-2}
\end{table}

\subsection{More Error Cases}
\label{app:Error Case}

In \cref{Error Case}, we illustrate the semantic consistency of GIC using the celebA dataset, where images with similar semantics are considered as belonging to the same spurious feature category. This distinguishes GIC, which relies on semantic features for recognition, from human-based oracle group labels. We observe the same phenomenon on the waterbirds dataset, as shown in \cref{fig:waterbirds}. The misclassifications of GIC can be categorized into two types: (1) Land background with typical water features, such as extensive blue regions, often leads GIC to misclassify land as water. (2) Water background with typical land features, such as abundant tree branches, ponds with lush green vegetation, or large tree reflections, frequently results in GIC classifying them as land. We believe that such consistency precisely reflects GIC's accurate recognition of image semantics. As mentioned in the main text, such accurate recognition is crucial when it comes to invariant learning that requires leveraging different image semantics (e.g., using mixup to disrupt semantic features of spurious features). We present more examples of misclassifications by GIC from the Waterbirds and CelebA datasets in \cref{fig:error-1}, \ref{fig:error-2}, and \ref{fig:error-3} to support the existence of semantic consistency in GIC.

\begin{figure}[H]
  \centering
  \includegraphics[width=.9\textwidth, height=0.15\textheight]{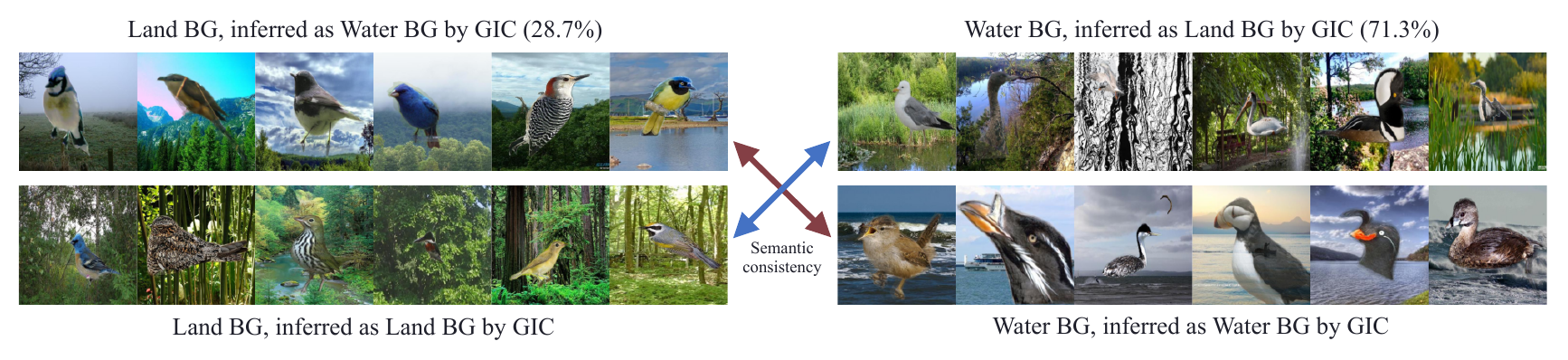}
  \caption{Misclassified samples on Waterbirds. The semantic consistency in GIC leads to the misclassification of water with a large amount of land elements (such as twigs and greenery) as land (71.3\%), and land with a large blue area (a typical characteristic of water) as water (28.7\%).}
  \label{fig:waterbirds}
\end{figure}

\begin{figure}[H]
  \centering
  \includegraphics[width=.9\textwidth, height=0.4\textheight]{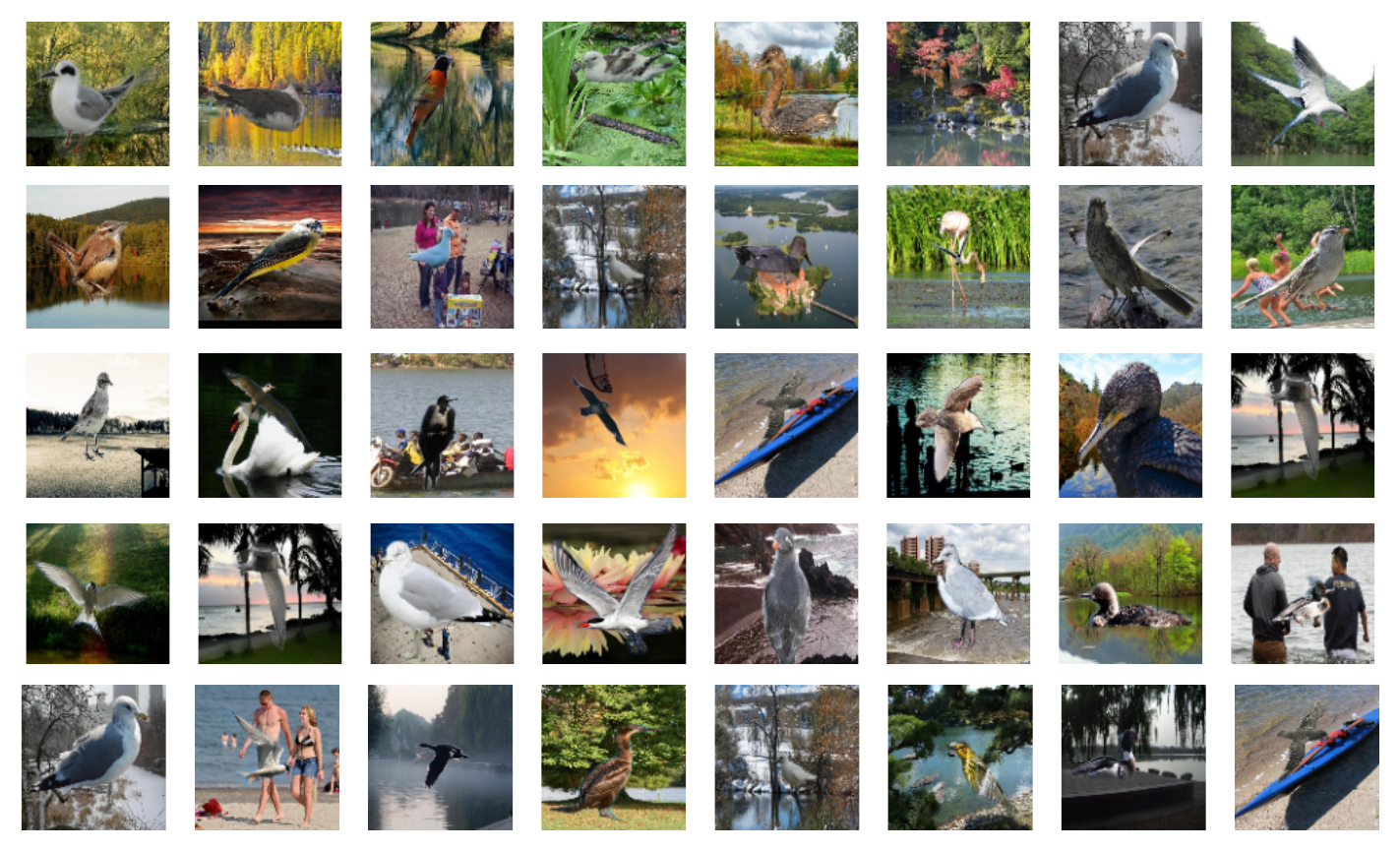}
  \caption{More error cases on Waterbirds. Instances misclassified by GIC as land but are actually water. We find that most of these samples share common characteristics, such as having abundant green vegetation, striped patterns (tree trunks, branches, humans), or locations that combine land and water features (such as beaches, coastal cities). These typical land features are the reasons for GIC's misclassification.}
  \label{fig:error-1}
\end{figure}

\begin{figure}[H]
  \centering
  \includegraphics[width=.9\textwidth, height=0.4\textheight]{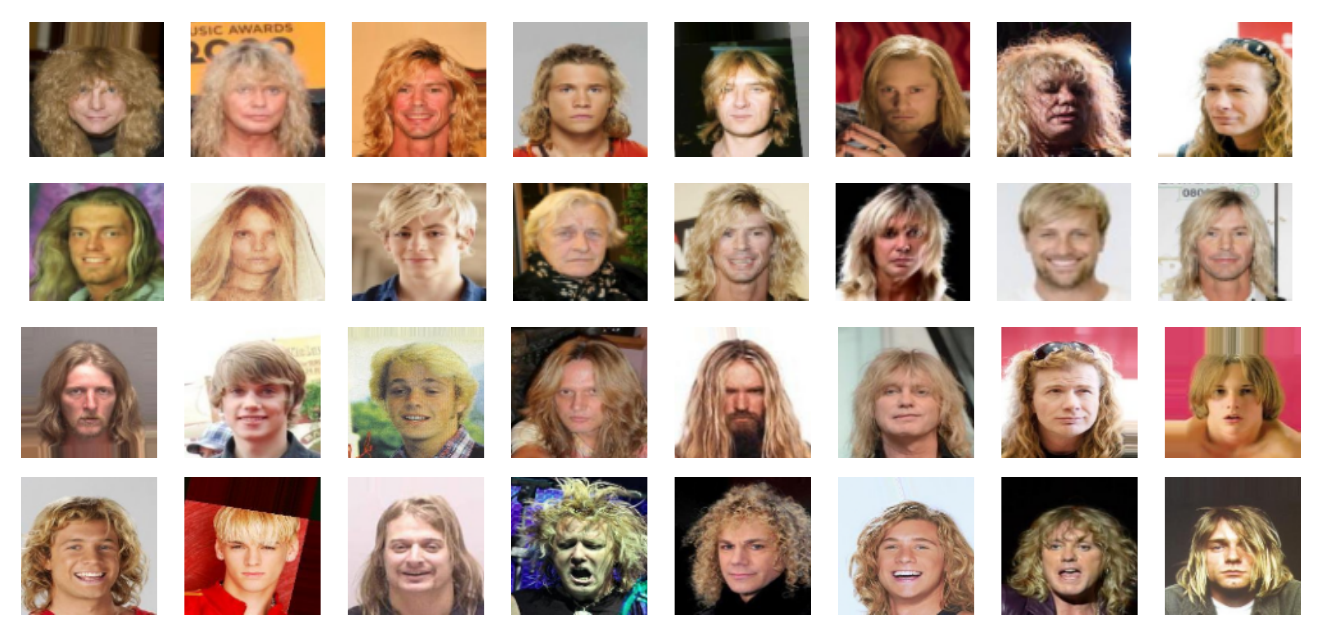}
  \caption{More error cases on CelebA. Instances misclassified by GIC as woman but are actually man. We notice that most of them are males with long hair. The presence of long hair, a feature commonly associated with the female, may be the key factor leading to GIC's misclassification.}
  \label{fig:error-2}
\end{figure}

\begin{figure}[H]
  \centering
  \includegraphics[width=.9\textwidth, height=0.4\textheight]{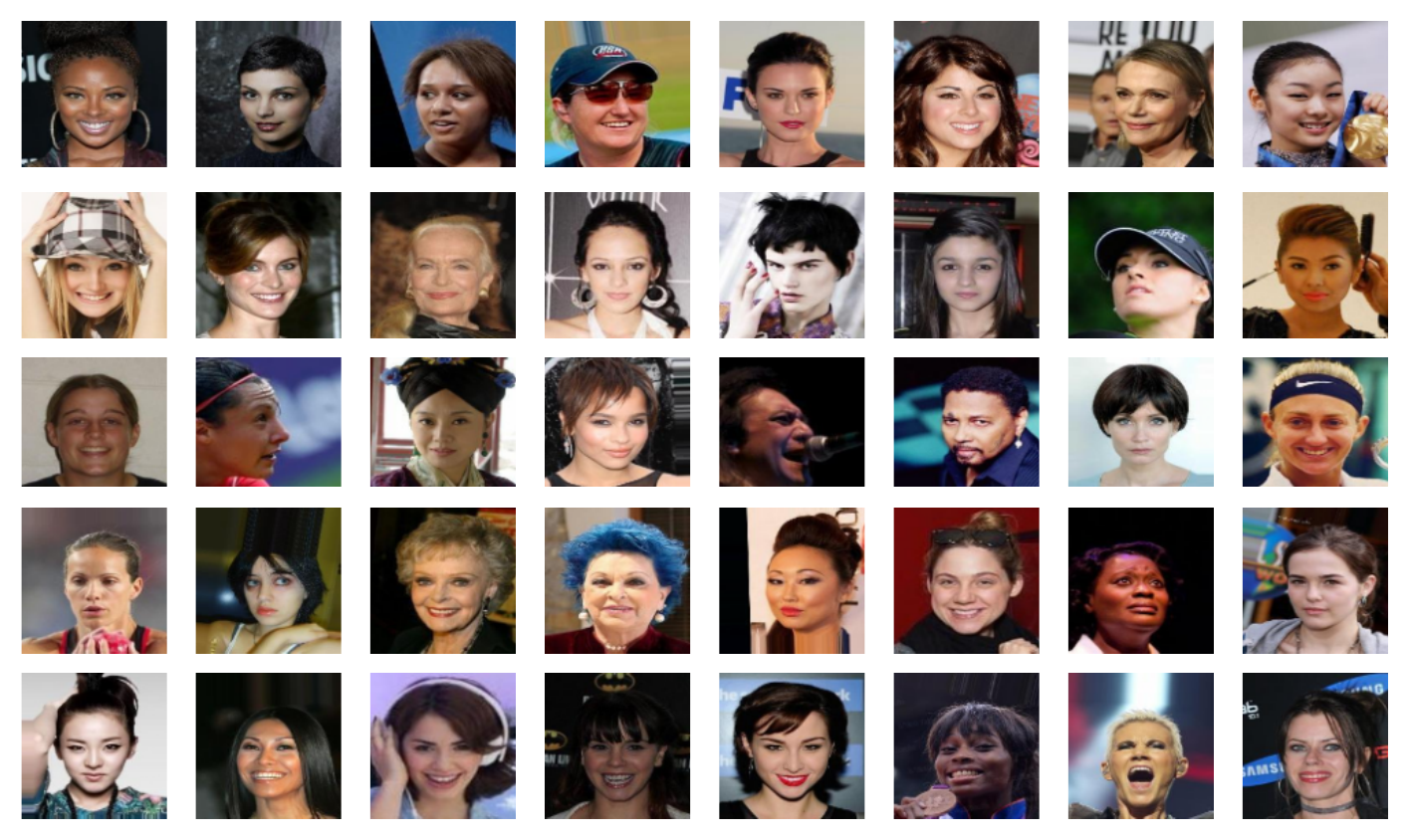}
  \caption{More error case on CelebA. Instances misclassified by GIC as male but are actually female. We find that most of these females have short hair or wear hats or headgear that obscures their long hair. The presence of short hair may be the reason for their misclassification by GIC, which aligns with the conclusion shown in \cref{fig:error-2}.}
  \label{fig:error-3}
\end{figure}
\subsection{The Construction of Comparison Data}
\label{app: The construction of comparison data}
In this section, we discuss the methods for constructing comparison data. In \cref{The Construction of Comparison Data}, we proposed three methods for obtaining comparison data. In the experimental section (Section \ref{Experiments}), we used labeled/unlabeled validation sets as labeled/unlabeled comparison data from non-training datasets. The significant improvement in worst-group accuracy indicates the effectiveness of these two methods of obtaining comparison data.

To support the effectiveness of the method of constructing comparison data by non-uniform sampling from the training set, we first conduct additional experiments on CMNIST. Following a similar approach to JTT, we train an ERM model on the training set and divide the samples into an error set (misclassified samples) and a non-error set (correctly classified samples). The error set is typically composed of samples from the minority group with spurious associations \cite{liu2021just}. We then sample an equal number of samples from both the error set and non-error set. For instance, when the sampling ratio is set to 1\%, we sample 300 (1\% of 30,000) samples from each set, resulting in a total of 600 samples as the comparison data. These comparison data samples are combined with the remaining 29,400 training samples for GIC training. After obtaining the spurious attribute classifier $f_{\mathrm{GIC}}$, we directly evaluate its prediction accuracy on the entire training dataset, as shown in \cref{fig:construction-ratio}. We observe that as the sampling rate gradually increases, leading to a growth in the number of comparison data samples, GIC's capability to infer spurious features also enhances. Remarkably, on CMNIST, when the sampling rate is around 10\%, the performance of GIC based on training data for inferring spurious features aligns with that of GIC using validation data directly.
\begin{figure}[H]
  \centering
  \includegraphics[width=.3\textwidth, height=0.15\textheight]{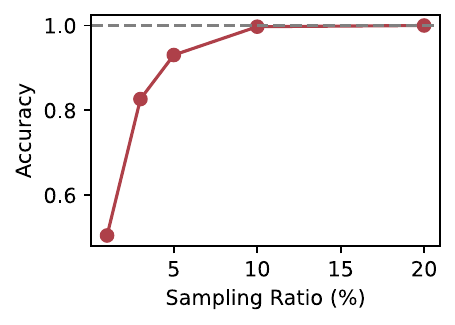}
  \caption{Test accuracy on spurious attributes using non-uniform sampling from the training dataset. The dashed line represents the accuracy of GIC's spurious feature label inference using labeled validation data.}
  \label{fig:construction-ratio}
\end{figure}



We also conduct additional experiments on CMNIST, Waterbirds, and CelebA to explore the strategy of non-uniform sampling from training data as comparison data. Notably, we observe that for Waterbirds and CelebA, relying solely on training data would significantly limit the sample size of comparison data. Therefore, we combine the training and validation data as a new dataset for constructing comparison data for the Waterbirds and CelebA datasets. Specifically, we mix the training and validation data, then split them into new training and validation sets at a 50\% ratio. We train on the new training set using the ERM method and infer labels on the new validation set, resulting in an error set (misclassified samples) and a non-error set (correctly classified samples). Similarly, we train the model on the validation set and identified the error set and non-error set on the training set through group inference. We then combine the error and non-error sets and sample a fixed proportion (50\%) from each set to serve as comparison data, with the remainder as training data. We then apply GIC to infer group labels and learn invariant features. To further enhance performance, we implement a boosting approach by repeating the aforementioned process twice. In the second iteration, we base it on the GIC model obtained from the first round of training to infer group labels and construct the error set and non-error set.

We further report the worst-group accuracy of GIC when the comparison data is generated by non-uniform sampling from the training dataset, using Mixup as the downstream algorithm. We refer to this variant of GIC as $\mathrm{GIC}_{\mathcal{C}_{t}}$-M. The experimental results in \cref{tab:acc-construct} demonstrates that constructing comparison data by non-uniform sampling from the training dataset is an effective strategy, as the worst-group accuracy is close to that achieved by the GIC approach using labeled/unlabeled validation data as the comparison dataset, particularly on CMNIST, where the performances are almost comparable.
\begin{table}[H]
\setlength{\abovecaptionskip}{-0.01cm}
\centering
\caption{{Comparison of GIC's worst-group accuracy using different sources of comparison data.}}
\vskip 0.1in
{
\begin{tabular}{@{}lccccccccccc@{}}
\toprule
{Method} &  {CMNIST} &  {\makecell{Waterbirds}}& {\makecell{CelebA}}\\
\midrule
\rowcolor{gray!25}\multicolumn{4}{c}{\textit{validation}} \\
\midrule
$\mathrm{GIC}_{\mathcal{C}_y}$-M &72.2$\pm \scriptstyle{0.5}$ &86.3$\pm \scriptstyle{0.1}$ &89.4$\pm \scriptstyle{0.2}$\\
$\mathrm{GIC}_{\mathcal{C}}$-M  & 71.7$\pm \scriptstyle{0.3}$ & 85.4$\pm \scriptstyle{0.1}$&89.5$\pm \scriptstyle{0.0}$\\
\midrule
\rowcolor{gray!25}\multicolumn{4}{c}{\textit{non-uniform sampling.}} \\
\midrule
$\mathrm{GIC}_{\mathcal{C}_{t}}$-M &72.2 $\pm \scriptstyle{0.1}$ &85.7$\pm \scriptstyle{0.3}$&87.5$\pm \scriptstyle{0.7}$\\
\bottomrule
\end{tabular}}
\label{tab:acc-construct}
\end{table}
\subsection{The Necessity of Considering Unlabeled Datasets as Comparison Data.}
\label{app:The Necessity of Considering Unlabeled Datasets as Comparison Data}
Additionally, we want to emphasize the necessity of considering unlabeled comparison data, such as sampling from the test set. In the real world, unlabeled data is often more cost-effective and easier to obtain than labeled data. By utilizing a large amount of unlabeled data as comparison data, the applicability of GIC can be greatly expanded. Our results in \cref{fig:construction} demonstrate that when GIC trains a spurious feature classifier using labeled comparison data (with sample sizes ranging from 10 to 1000), its accuracy in predicting spurious features and improving the accuracy of the worst-performing group is significantly worse than when using unlabeled but larger comparison data (with sample sizes ranging from 3000 to 10000). This experiment highlights the importance of considering unlabeled comparison data.
\begin{figure}[H]
  \centering
  \includegraphics[width=.3\textwidth, height=0.15\textheight]{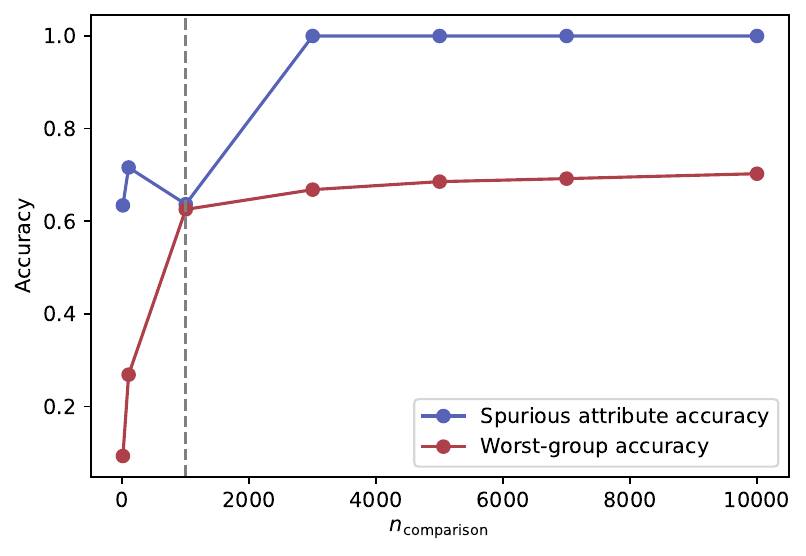}
  \caption{The worst-group accuracy and spurious attributes accuracy on CMNIST. On the left side of the dashed line, labeled comparison data (ranging from 10 to 1000) is utilized, but due to the small sample size, GIC demonstrates poor performance in inferring groups and improving the worst-group accuracy. On the right side of the dashed line, a larger unlabeled validation set is used as comparison data, resulting in a noticeable improvement in GIC's performance.}
  \label{fig:construction}
\end{figure}

\subsection{Group Distribution Discrepancy Analysis}
\label{app:distribution diff}

We then discuss the impact of differences in group distributions between the comparison data and training data on the effectiveness of GIC. As mentioned in \cref{Discussion}, even slight differences in group distributions can force the spurious term to learn invariant features. Although GIC can still learn the spurious attributes and infer group labels in the presence of slight differences, we want to understand the relationship between the degree of differences and GIC's performance. To answer this question, we use the CelebA dataset as an example. As described in \cref{app:Dataset Details}, inferring spurious attributes on the CelebA dataset can be challenging because the training data and comparison data (validation set) have very similar group distributions. We keep the training set and test set fixed and adjust the group distribution of the comparison data (validation set) using the oracle group labels. We try four different sets of group distributions for the comparison data, and their detailed group distributions are displayed in \cref{fig:diff-tf}. We then train GIC using the training data and these four different sets of comparison data while keeping the parameters and model consistent, and calculate the worst-group accuracy on the test set.

\begin{table}[H]
\centering
    \caption{The result after adjusting the group distribution using the group labels predicted by GIC.}
    \vskip 0.1in
    \label{tab:diff-adjust}
    \begin{tabular}{@{}lccccccc@{}}
    \toprule
    \multirow{2}{*}{Method} & \multirow{2}{*}{\makecell{Readjust}} & \multicolumn{2}{c}{\makecell{CelebA}} \\
    &  & Avg. & Worst\\
    \midrule
    \rowcolor{gray!25}\multicolumn{4}{c}{\textit{subsample}} \\
    \midrule
    $\mathrm{GIC}_{\mathcal{C}_y}$ & $\times$  & 91.1$\pm \scriptstyle{0.1}$ & 86.1$\pm \scriptstyle{2.2}$\\
    \midrule
    $\mathrm{GIC}_{\mathcal{C}_y}$ & $\checkmark$  &92.1$\pm \scriptstyle{0.3}$&  89.1$\pm \scriptstyle{0.9}$\\
    \bottomrule
    \end{tabular}
\end{table}

\begin{figure}[H]
\centering
\includegraphics[width=0.5\textwidth, height=0.13\textheight]{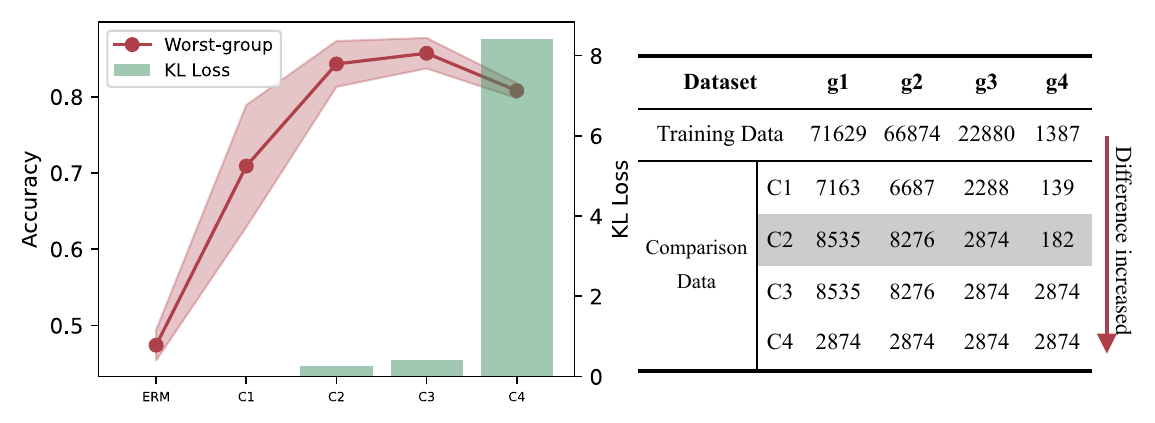}
\caption{Increasing differences in group distribution encourage GIC to better infer group labels, further improving the performance of the worst group.}
\label{fig:diff-tf}
\end{figure}

In \cref{fig:diff-tf}, we observe that as the differences in group distributions between the comparison data and training data increase, the worst-group performance improves. Additionally, we calculate the KL Loss for each comparison data and find that as the differences in group distributions increase, the final KL Loss also increases. This suggests that GIC has the ability to accurately capture the differences in group distributions between the comparison data and training data. 

It is worth noting that the group distribution of comparison data C1 is almost identical to the training data (KL loss close to 0), yet its worst-group accuracy still outperforms the worst-group accuracy of the ERM model. We believe that this is because when there is no group distribution difference, whether the spurious term forces GIC to learn invariant features or spurious features is a completely random event with equal probability. However, due to the presence of spurious correlations, GIC is more inclined to prioritize learning spurious features, thereby improving the worst-group accuracy. However, such events of prioritizing learning spurious features are not stable, which is why we observe larger variances in the results corresponding to the C1 data compared to other comparison datasets.

The phenomenon observed in \cref{fig:diff-tf} (that larger differences can improve GIC performance) inspires us to attempt adjusting the group distribution of the comparison data again using the group label results predicted by GIC. Specifically, we use $f_{\mathrm{GIC}}$ trained on the celebA dataset to adjust the comparison data (validation set) again. This involves Upsample the two smallest groups to the same sample size as the second largest group, and then retraining GIC using the new comparison data and training data. Table \ref{tab:diff-adjust} reports the GIC obtained after re-adjusting the comparison data, and we observe that the accuracy of the worst-group is further improved through this operation.

\subsection{Compute Resources and Training Time of GIC}
\label{GIC compute resources and training time}

All experiments for CMNIST, Waterbirds, CelebA and CivilComments were run on a single NVIDIA GeForce RTX 4090 GPU.

Regarding the runtime of GIC, additional computational overhead compared to methods inferring spurious labels via ERM stems from inputting data representations $\mathbf{z}$ into the GIC model $f_{\mathrm{GIC}}$ for both training and predicting the spurious feature labels $\hat y^{c}_{s,\mathbf{w}}$. This supplementary step should not necessitate substantial computation cost. As the dimensionality of data representation $\mathbf{z}$ is reduced relative to the raw data, and the experiments have evidenced that a simple single-layer neural network for $f_{\mathrm{GIC}}$ suffices in delivering promising performance. Concurrently, it's noted in \cref{app:Training Details} that the training epoch $K$  can be contained within 20 epochs for all four experimental datasets. We further calculate the total time required for training the spurious feature classifier $f_{\mathrm{GIC}}$ and predicting the spurious feature labels $\hat y^{c}_{s,\mathbf{w}}$ across the four experimental datasets, manifesting that the computational cost introduced by GIC is rather minimal.

\begin{table}[H]
\centering
\caption{The average runtime for training $f_{\mathrm{GIC}}$ and predicting $\hat y^{c}_{s,\mathbf{w}}$. The timing commences subsequent to obtaining the data representation $\mathbf{z}$ and concludes upon acquiring the predicted $\hat y^{c}_{s,\mathbf{w}}$.}
\vskip 0.1in
{
\begin{tabular}{@{}lccc|cc|cc|cccc@{}}
\toprule
{Dataset} & Average Time\\
\midrule
CMNIST&30.1s \\
Waterbirds&60.0s\\
CelebA & 519.9s\\
CivilCommnets &901.0s \\

\bottomrule
\end{tabular}}
\label{tab:time}
\end{table}

\end{document}